\newtheorem{theorem}{Theorem}
\newtheorem{proposition}[theorem]{Proposition}
\newtheorem{lemma}[theorem]{Lemma}
\newtheorem{remark}{Remark}
\newtheorem{assumption}{Assumption}
\newcommand{\diag}{\mathop{\mathrm{diag}}}
\newcommand{\rad}{\mathop{\mathrm{rad}}}
\newcommand{\m}{\mathop{\mathrm{m}}}
\begin{document}

\title{Warped Gaussian Processes Occupancy Mapping with Uncertain Inputs}

\author{Maani Ghaffari Jadidi, Jaime Valls Miro, and Gamini Dissanayake
\thanks{The authors are with Centre for Autonomous System, Faculty of Engineering and IT, University of Technology Sydney, Ultimo, NSW 2007, Australia {\tt\small \{maani.ghaffarijadidi, jaime.vallsmiro, gamini.dissanayake\}@uts.edu.au}}
}



\maketitle

\begin{abstract}
In this paper, we study extensions to the Gaussian Processes (GPs) continuous occupancy mapping problem. There are two classes of occupancy mapping problems that we particularly investigate. The first problem is related to mapping under pose uncertainty and how to propagate pose estimation uncertainty into the map inference. We develop expected kernel and expected sub-map notions to deal with uncertain inputs. In the second problem, we account for the complication of the robot's perception noise using Warped Gaussian Processes (WGPs). This approach allows for non-Gaussian noise in the observation space and captures the possible nonlinearity in that space better than standard GPs. The developed techniques can be applied separately or concurrently to a standard GP occupancy mapping problem. According to our experimental results, although taking into account pose uncertainty leads, as expected, to more uncertain maps, by modeling the nonlinearities present in the observation space WGPs improve the map quality.
\end{abstract}

\begin{IEEEkeywords}
Mapping, SLAM, Probability and Statistical Methods, Range Sensing.
\end{IEEEkeywords}

\IEEEpeerreviewmaketitle

\section{INTRODUCTION}

\IEEEPARstart{I}{n} many scenarios such as robotic navigation, the robot pose is partially observable; and we have access only to an estimate (noise-corrupted version) of the robot pose, as depicted in Figure~\ref{fig:intel_posecov}. Under these circumstances, the robot requires to navigate in an uncertain environment~\cite{roy1999coastal,Prentice01112009,valencia2013planning,vallve2015potential}, and the probability distribution of the robot pose will be the input for the mapping problem. In practice, and based on the application, most of the occupancy mapping techniques ignore robot pose uncertainty for map representation; either for efficiency or as the resultant map is not suitable for navigation. Furthermore, dense representation of the state often makes uncertainty propagation intractable. This problem is not unique to \emph{Gaussian Processes Occupancy Maps} (GPOMs), but it is also present in \emph{Occupancy Grid Maps}. With this motivation, we study the problem of GP occupancy mapping under pose uncertainty. The first solution is uncertainty propagation through \textit{kernel} functions. The second solution we propose uses the \textit{expected sub-map} notion to incorporate pose uncertainties into the map building process.

The second problem studied is motivated by the fact that due to the smoothness common in the resultant regressed maps, inferring a high-quality map compatible with the actual shape of the environment is non-trivial~\cite{jadidi2016gaussian}. 
Furthermore, for a complicated task such as robotic mapping~\cite{Thrun:2003:RMS:779343.779345}, the additive white Gaussian noise assumption in standard GPs can be simplistic. To account for these problems, we improve the incremental GPOM technique using \emph{Warped Gaussian Processes} (WGPs)~\cite{snelson2004warped}. The core idea is to map the target values through a warping (transforming) function to capture the nonlinear behavior of the observations.

We tackle the mentioned problems to improve map quality and provide results using incremental WGP Occupancy Maps (WGPOMs) under pose uncertainty.

\begin{figure}
  \centering 
  \includegraphics[width=0.83\columnwidth,trim={3.cm 3cm 2.25cm 2.75cm},clip]{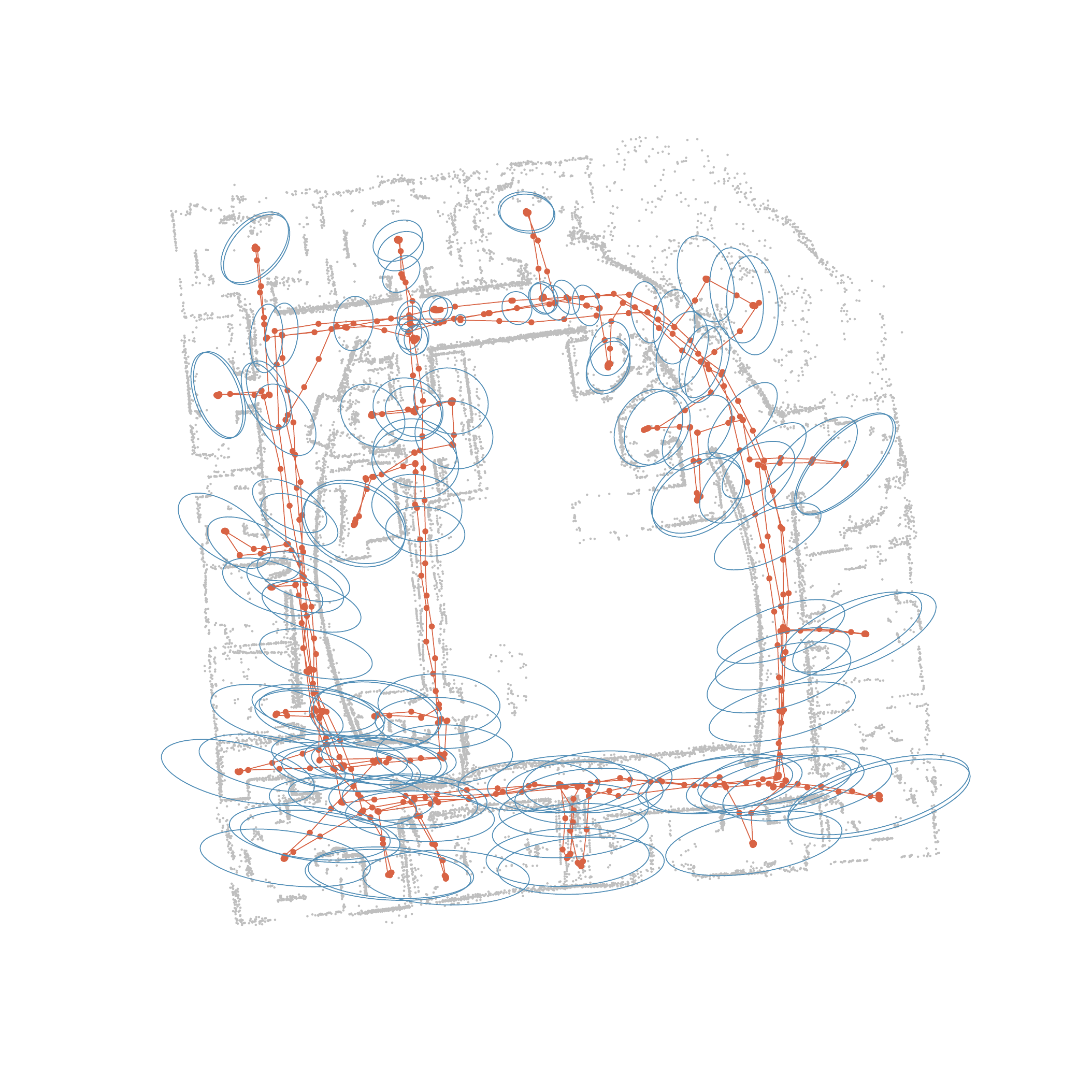}
  \caption{The pose-graph estimation of the robot trajectory from the Intel research lab. dataset~\cite{Radish_data_set}, solved using the techniques in~\cite{ila2010information,valencia2013planning}. The large uncertainties from pose estimation are often ignored in the dense occupancy map representation. For the sake of clarity, loop-closures are omitted, and only one-sixth of the pose covariances are illustrated.}
  \label{fig:intel_posecov}
\end{figure}

\subsection*{Notation}
Probabilities and probability densities are not distinguished in general. Matrices are capitalized in bold, such as in $\boldsymbol X$, and vectors are in lower case bold type, such as in $\boldsymbol x$. Vectors are column-wise and $1\colon n$ means integers from $1$ to $n$. The Euclidean norm is shown by $\lVert \cdot \rVert$. $\mathrm{tr}(\boldsymbol X)$ and $\lvert \boldsymbol X \rvert$ denote the trace and the determinant of matrix $\boldsymbol X$, respectively. Random variables, such as $X$, and their realizations, $x$, are sometimes denoted interchangeably. $x^{[i]}$ denotes a reference to the $i$-th element of the variable. An alphabet such as $\mathcal{X}$ denotes a set. A reference to a test set quantity is shown by $\boldsymbol x_*$. The $n$-by-$n$ identity matrix is denoted by $\boldsymbol I_{n}$.  The sign function, and the absolute value are denoted by $\mathrm{sgn}(x)$, and $\lvert x \rvert$, respectively. Finally, $\mathbb{E}[\cdot]$ and $\mathbb{V}[\cdot]$ denote the expected value and variance of a random variable, respectively.

\subsection*{Outline}
A review of related works is given in the following section. We first discuss the problem of Gaussian processes occupancy mapping under pose uncertainty in Section~\ref{sec:uncertainpose}; followed by presenting the warped Gaussian processes occupancy mapping in Section~\ref{sec:Warped}. Robotic mapping experiments are presented in Section~\ref{sec:wgpomres}, and finally, Section~\ref{sec:conclusion} concludes the paper.

\section{Related Work}

Current robotic navigation algorithms rely on a dense environment representation for safe navigation. Occupancy grid maps are the standard way of environment representation in robotics and are the natural choice for online implementations~\cite{moravec1985high,elfes1987sonar,thrun2003learning,hornung2013octomap,merali2014optimizing}. However, they simplify the mapping problem to a set of independent cells and estimate the probability of occupancy for each cell by ignoring the correlation between them. The map posterior is then approximated as the product of its marginals.

The FastSLAM algorithm~\cite{Montemerlo03fastslam2} exploits \emph{Rao-Blackwellized particle filters}~\cite{doucet2000rao}, and relies on the fact that, given the robot trajectory, landmarks are conditionally independent. In practice, FastSLAM can generate accurate maps; however, it suffers from degeneracy and depletion problems~\cite{bailey2006consistencypf}. Full SLAM methods estimate the \emph{smoothing distribution} of the robot trajectory and the map, given all available data. In robotics, full SLAM using probabilistic graphical models representation and the \emph{maximum a posteriori} estimate derived by solving a nonlinear least squares problem~\cite{dellaert2006square,grisetti2008online,kaess2011isam2,konolige2010efficient,thrun2006graph} can be considered the state-of-the-art and we thus focus our attention on using a pose-graph estimation of the robot trajectory as in~\cite{ila2010information,valencia2013planning}.

Kernel methods in the form of Gaussian processes (GPs) framework~\cite{rasmussen2006gaussian} are non-parametric regression and classification techniques that have been used to model spatial phenomena~\cite{vasudevan2009gaussian}. The development of GPOM is discussed in~\cite{o2009contextual,t2012gaussian}. The incremental GP map building using the Bayesian Committee Machine (BCM) technique~\cite{tresp2000bayesian} is developed in~\cite{kim2012building,jadidi2013exploration,jadidi2013acra,maani2014com} and for online applications in~\cite{7487232}. In~\cite{ramoshilbert}, the Hilbert maps technique is proposed and is more scalable. However, it is an approximation for continuous occupancy mapping and produces maps with less accuracy than GPOM.

Approximate methods for uncertainty propagation into GPs models through kernel functions are proposed in~\cite{girard2004approximate}, and developed for GPOM in~\cite{o2010contextual}. 
Generally speaking, training and query points can both be noisy. In~\cite{girard2004approximate}, the problem of prediction at an uncertain input is discussed while it is assumed the input in training data is noise free. In~\cite{o2010contextual}, using a similar approach, the idea is extended to account for noisy training input. Similarly, we assume the input in training data is uncertain and query points are deterministic. In this paper, we employ this technique and develop the expected sub-map technique for approximate uncertainty propagation, then incorporate both techniques into the WGPOM framework.

\section{Mapping under Pose Uncertainty}
\label{sec:uncertainpose}
The main challenge in building occupancy maps under robot pose uncertainty is the dense representation of map belief which makes uncertainty propagation computationally expensive. Maximum likelihood dense map representations are currently the common practice which does not necessarily produce correct maps, especially if pose estimation uncertainties are significant. This popularity can be understood from the fact that employing an environment representation constructed with significant uncertainties results in vague obstacles and free space and is not suitable for robotic motion planning and navigation. However, accounting for pose uncertainties in mapping is not only important for correct map representations, but also for motion planning (prediction) tasks.

\subsection{Problem Statement and Formulation}
\label{subsec:problemuin}
Let $\mathcal{M}$ be the set of possible static occupancy maps. We consider the map of the environment as an $n_m$-tuple random variable \mbox{$(M^{[1]},\dots,M^{[n_m]})$} whose elements are described by a normal distribution \mbox{$M^{[i]} \sim \mathcal{N}(\mu^{[i]},\sigma^{[i]})$, $\forall \ i \in \{1\colon n_m\}$}. Let $\mathcal{X} \subset \mathbb{R}^2$ be the set of spatial coordinates to build a map on. Let \mbox{$y = \tilde{y} + \epsilon_y$} be a noisy measurement (class label; $-1$ and $1$ for unoccupied and occupied, respectively) at a noisy sample \mbox{$\boldsymbol x = \boldsymbol{\tilde{x}} + \boldsymbol\epsilon_x$}, where \mbox{$\epsilon_y \sim \mathcal{N}(0,\sigma_n^2)$} and \mbox{$\boldsymbol\epsilon_x \sim \mathcal{N}(\boldsymbol 0,\boldsymbol\Sigma_x)$}. Define a training set \mbox{$\mathcal{D} = \{(\boldsymbol x^{[i]},y^{[i]}) \mid i=1\colon n_t\}$} which consists of noisy measurements at noisy locations. Let function \mbox{$f:\mathcal{X}\rightarrow\mathcal{M}$}, i.e. \mbox{$y = f(\boldsymbol{\tilde{x}} + \boldsymbol\epsilon_x) + \epsilon_y$}, be the real underlying process that we model as a Gaussian process \mbox{$f(\boldsymbol x) \sim \mathcal{GP}(0, k(\boldsymbol x,\boldsymbol x'))$}, where \mbox{$k:\mathcal{X} \times \mathcal{X} \rightarrow \mathbb{R}$} is the \emph{covariance function} or \emph{kernel}; and $\boldsymbol x$ and $\boldsymbol x'$ are either in the training or the test (query) sets.
Estimate \mbox{$p(M=m\mid \mathcal{D})$}, i.e. the map posterior probability given a noisy training set. For a given query point in the map, $\boldsymbol x_*$, GP predicts a mean, $\mu$, and an associated variance, $\sigma$. We can write \mbox{$m^{[i]} = y(\boldsymbol x^{[i]}_*) \sim \mathcal{N}(\mu^{[i]},\sigma^{[i]})$}.
To show a valid probabilistic representation of the map $p(m^{[i]}\mid \mathcal{D})$, the classification step squashes data into the range $[0,1]$.

We propose two methods to solve the defined problem. The first approach is based on the \textit{expected kernel}. The alternative approach, \textit{expected sub-map}, treats all inputs deterministically and propagates pose uncertainties through uncertain map fusion. The following assumptions are made in the present work:
\begin{assumption}[Deterministic query points]
 In the problem of Gaussian processes occupancy mapping under robot pose uncertainty, query points are deterministic.
\end{assumption}
\begin{assumption}
\label{assump:statcov}
 The covariance function in the expected sub-map method is stationary, $k(\boldsymbol x, \boldsymbol x')=k(\lVert \boldsymbol x - \boldsymbol x' \rVert)$.
\end{assumption}
\begin{remark}
 Using Assumption~\ref{assump:statcov}, map inference in the local coordinates of the robot (local map) can be done using deterministic inputs.
\end{remark}

\subsection{System Dynamics}
\label{subsec:sysdayn}
The equation of motion of the robot is governed by the nonlinear partially observable equation as follows.
\begin{equation}
\small
\label{eq:reom}
\boldsymbol x_{t+1}^- = f(\boldsymbol x_{t}, \boldsymbol u_{t}, \boldsymbol w_{t}) \quad \boldsymbol w_{t} \sim \mathcal{N}(\boldsymbol 0,\boldsymbol Q_{t})
\end{equation}
moreover, with appropriate linearization at the current state estimate, we can predict the state covariance matrix as
\begin{equation}
\small
\label{eq:predcov}
 \boldsymbol \Sigma_{t+1}^- = \boldsymbol F_t \boldsymbol \Sigma_{t} \boldsymbol F_t^T + \boldsymbol W_t \boldsymbol Q_t \boldsymbol W_t^T
\end{equation}
where $\boldsymbol F_t = \frac{\partial f}{\partial \boldsymbol x} \vert_{\boldsymbol x_{t}, \boldsymbol u_{t}}$ and $\boldsymbol W_t = \frac{\partial f}{\partial \boldsymbol w} \vert_{\boldsymbol x_{t}, \boldsymbol u_{t}}$ are the Jacobian matrices calculated with respect to $\boldsymbol x$ and $\boldsymbol w$, respectively.

\subsection{Expected Kernel}
\label{subsec:ukernel}

The core idea in the expected kernel approach is taking an expectation of the covariance function over uncertain inputs. Let $\boldsymbol x$ be distributed according to a probability distribution $p(\boldsymbol x)$. The expected covariance function can be computed as
\begin{equation}
 \small
 \label{eq:expctkernel}
  \tilde{k} = \mathbb{E}[k] = \int_{\Omega} kdp
\end{equation}
In general, this integral is analytically intractable; therefore we employ two numerical approximations to solve~\eqref{eq:expctkernel}. However, for the case of the squared exponential (SE) kernel, a closed-form solution exists~\cite{NIPS2002_2313}. Hence, once the expected covariance matrix is calculated, we can compute the predictive conditional distribution for a single query point similar to standard GPs.

\subsubsection*{Monte Carlo Integration}
\label{subsubsec:montecarlo}
Since we assume the distribution of the uncertain input is known, by drawing independent samples, $\boldsymbol x^{[i]}$, from $p(\boldsymbol x)$ and using a Monte-Carlo technique, we can approximate Equation~\eqref{eq:expctkernel} by
\begin{equation}
 \small
 \label{eq:mckernel}
 \tilde{k} = \frac{1}{n}\sum_{i=1}^{n}k_i
\end{equation}
where $k_i$ is the covariance function computed at $\boldsymbol x^{[i]}$.
\begin{remark}
 In Equation~\eqref{eq:mckernel}, the covariance $k(\boldsymbol x,\boldsymbol x)$ and cross-covariance $k(\boldsymbol x,\boldsymbol x_*)$ are both denoted as $k_i$. Depending on the input, the integration is only performed on training points as it is assumed query points are deterministic.
\end{remark}

\subsubsection*{Gauss-Hermite Quadrature}
\label{subsubsec:gausshermit}
Gauss-Hermite quadrature~\cite{davis2007methods} of integrals of the kind $\int_{-\infty}^{\infty} \exp(-x^2)f(x)dx$ are given by $\int_{-\infty}^{\infty} \exp(-x^2)f(x)dx \approx \sum_{j=1}^n w^{[j]} f(x^{[j]})$.
The multi-variate normal distribution of noisy input is given by $\mathcal{N}(\boldsymbol{\tilde{x}}, \boldsymbol\Sigma_x)$. Through a change of variable such that $\boldsymbol L \boldsymbol L^{T} = 2\boldsymbol\Sigma_x$ and $\boldsymbol u = \boldsymbol L^{-1}(\boldsymbol x - \boldsymbol{\tilde{x}})$, where $\boldsymbol L$ is a lower triangular matrix that can be calculated using a Cholesky factorization, the Equation~\eqref{eq:expctkernel} can be approximated as
\begin{equation}
 \small
\label{eq:multighk}
 \tilde{k} = (2\pi)^{\frac{-d}{2}} \sum_{i_1=1}^{n} ... \sum_{i_d=1}^{n} \bar{w} k_{i_{1:d}}
\end{equation}
where $\bar{w} \triangleq \prod_{j=1}^{d} w^{[i_{j}]}$, $u^{[i_{j}]}$ are the roots of the Hermite polynomial $H_n$, $\boldsymbol u_{i_{1:d}} \triangleq [u^{[i_{1}]},...,u^{[i_{d}]}]^T$, and $k_{i_{1:d}}$ is the covariance function computed at $\boldsymbol x_{i_{1:d}} = \boldsymbol L \boldsymbol u_{i_{1:d}} + \boldsymbol{\tilde{x}}$. When $d=2$, we can simplify Equation~\eqref{eq:multighk} and write \mbox{$\tilde{k} = \frac{1}{2\pi} \sum_{i_1=1}^{n}\sum_{i_2=1}^{n} \bar{w} k_{i_{1:2}}$}.
\begin{remark}
 We assumed points from the map spatial support, $\boldsymbol x \in \mathcal{X}$, are global coordinates. In practice, to transform local points, an unscented transform~\cite{julier1997new} is used to reduce linearization errors~\cite{o2010contextual}.
\end{remark}

 An illustrative example of GP regression where inputs are uncertain is shown in Figure~\ref{fig:ek_toy_eg}. By propagating the input uncertainty using the expected kernel, the output does not follow the observations exactly yet remains consistent as the underlying function is within the estimated uncertainty bounds.

\begin{figure}
  \centering 
  \includegraphics[width=.85\columnwidth,trim={0.75cm 0cm 1.5cm 0.5cm},clip]{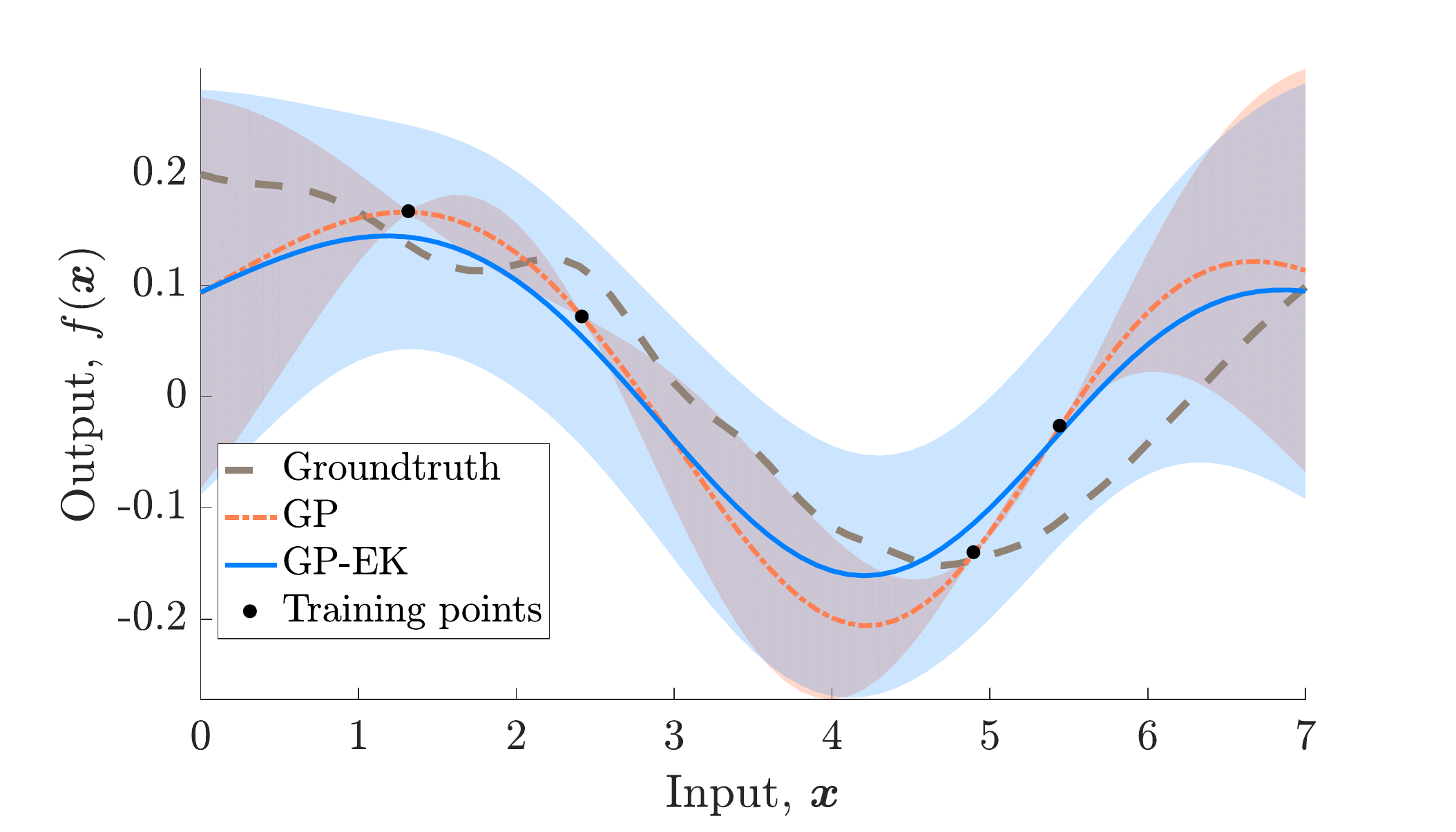}
  \caption{The plot shows an example of GP regression with uncertain inputs. The GP-EK shows GP regression by incorporating the input uncertainty using the expected kernel. The standard GP results are generated by ignoring the input uncertainty which cannot provide a consistent solution. The groundtruth function is \mbox{$f(x) = \frac{1}{5} \cos(x^2) e^{-x} + \frac{3}{20} \sin(x)$} and training points are corrupted by $\epsilon_x \sim \mathcal{N}(0,0.6^2)$.}
  \label{fig:ek_toy_eg}
\end{figure}

\subsection{Expected Sub-map}
\label{sec:Post}
We exploit the fact that a stationary covariance function does not depend on the selected coordinates, i.e. the global or a local sub-map frame. Therefore, we treat all training inputs as noise free and conduct map inference using deterministic inputs in the local (sensor/robot) frame. To fuse the inferred sub-map into the global map, we draw independent samples from $p(\boldsymbol x_{t})$. In other words, by taking the expectation over the location of the sub-map, we propagate uncertainty of each map point to its neighborhood. Thus, we have \mbox{$p(m|\boldsymbol{\tilde{x}},y) = \int p(m|\boldsymbol x,y) p(\boldsymbol x_{t}) d\boldsymbol x_{t}$}, and by drawing independent samples from $p(\boldsymbol x)$ and using a Monte-Carlo approximation it follows that \mbox{$p(m|\boldsymbol{\tilde{x}},y) \approx \frac{1}{n}\sum_{j=1}^{n} p(m_j|\boldsymbol x_j,y)$}. Note that any sub-map $p(m_j|\boldsymbol x_j,y)$ can be fused into the global map using the algorithms in~\cite{jadidi2016gaussian}. However, as a result of sampling, the expected map, $p(m|\boldsymbol{\tilde{x}},y)$, is similar to a mixture distribution; therefore, the mean and variance calculations need to be addressed accordingly. We present the following proposition to calculate the first two moments of $p(m|\boldsymbol{\tilde{x}},y)$.
\begin{lemma}
\label{lem:meanvarmix}
Let $X_1,X_2,...,X_n$ be random variables that are distributed according to probability densities $p(x_1), p(x_2),...,p(x_n)$, with constant weights $w_1,w_2,...,w_n$, where $\sum_{i=1}^n w_i = 1$. The probability density function of the mixture is $p(x) = \sum_{i=1}^n  w_i p(x_i)$. Given $\mu_i = \mathbb{E}[X_i]$ and $\sigma_i^2 = \mathbb{V}[X_i]$, the mean and variance of the mixture density is given by $\mu = \mathbb{E}[X] = \sum_{i=1}^n  w_i \mathbb{E}[X_i]$ and $\sigma^2 = \mathbb{V}[X] = \sum_{i=1}^n  w_i (\sigma_i^2 + \mu_i^2) - (\sum_{i=1}^n  w_i \mu_i)^2$.
\end{lemma}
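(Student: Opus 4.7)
The plan is a direct computation from the definition of expectation, exploiting linearity of the integral and the fact that the mixture density is a convex combination of the component densities. The statement is really two small claims, one for the mean and one for the variance, and both reduce to interchanging a finite sum with an integral.

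First I would compute $\mathbb{E}[X]$ by writing it as $\int x p(x) dx = \int x \sum_{i=1}^n w_i p(x_i) dx$, pulling the finite sum outside the integral, and recognizing $\int x p(x_i) dx = \mu_i$. This immediately yields $\mathbb{E}[X] = \sum_{i=1}^n w_i \mu_i$, matching the first claim. The interchange of sum and integral needs no deep justification because the sum is finite; the only implicit assumption is that each $X_i$ has a finite first moment, which is guaranteed by the hypothesis $\mu_i = \mathbb{E}[X_i]$.

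Next I would compute the variance through the standard identity $\mathbb{V}[X] = \mathbb{E}[X^2] - (\mathbb{E}[X])^2$. For the second moment, the same interchange gives $\mathbb{E}[X^2] = \sum_{i=1}^n w_i \mathbb{E}[X_i^2]$, and then I would substitute $\mathbb{E}[X_i^2] = \sigma_i^2 + \mu_i^2$, which follows from applying the definition of variance to each component. Squaring the mean gives $(\sum_{i=1}^n w_i \mu_i)^2$, and subtracting yields exactly the stated expression $\sum_{i=1}^n w_i (\sigma_i^2 + \mu_i^2) - (\sum_{i=1}^n w_i \mu_i)^2$.

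There is no real obstacle here; the only subtlety worth flagging is that the mixture variance is \emph{not} simply $\sum w_i \sigma_i^2$, and the cross term $\sum w_i \mu_i^2 - (\sum w_i \mu_i)^2$ captures the additional spread coming from the dispersion of the component means, which is essential when this lemma is applied to fuse sub-maps whose mean occupancy values differ across samples of the robot pose. I would therefore keep the proof at the level of a short two-step derivation and perhaps add a one-line remark pointing out this interpretation so that its use in the expected sub-map construction is transparent.
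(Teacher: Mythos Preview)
Your proof is correct and follows essentially the same approach as the paper: the paper's proof simply notes that the $k$-th moment of the mixture satisfies $\mathbb{E}[X^{(k)}] = \sum_{i=1}^n w_i \mathbb{E}[X_i^{(k)}]$ and then defines the variance accordingly, which is exactly your computation specialized to $k=1,2$ together with $\mathbb{V}[X] = \mathbb{E}[X^2] - (\mathbb{E}[X])^2$.
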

\begin{proof}
 The proof follows from the fact that for the $k$-th moment of the mixture, we can write \mbox{$\mathbb{E}[X^{(k)}] = \sum_{i=1}^n w_i \mathbb{E}[X_i^{(k)}]$} and define the variance accordingly.
\end{proof}
\begin{proposition}[Expected sub-map fusion]
 In incremental map building, to compute $p(m|\boldsymbol{\tilde{x}},y)$, sampled sub-maps can be fused into the global map using the following equations
 \begin{equation}
 \small
 \label{eq:expmapmean}
  \mathbb{E}[M] = \frac{1}{n}\sum_{j=1}^n \mathbb{E}[M_j]
 \end{equation}
 \begin{equation}
 \small
 \label{eq:expmapvar}
  \mathbb{V}[M] = \frac{1}{n} (\sum_{j=1}^n (\mathbb{V}[M_j] + \mathbb{E}[M_j]^2) - \frac{1}{n} (\sum_{j=1}^n \mathbb{E}[M_j])^2)
 \end{equation}
 where $\mathbb{E}[M_j]$ is the updated global map built using the $j$-th independently drawn robot pose sample, and \eqref{eq:expmapmean} and \eqref{eq:expmapvar} can be computed point-wise for every map point $m^{[i]}$.
\end{proposition}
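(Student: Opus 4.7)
The plan is to reduce the proposition to a direct application of Lemma~\ref{lem:meanvarmix}. Recall from the preceding discussion that the expected sub-map construction approximates the integral $p(m\mid\boldsymbol{\tilde x},y)=\int p(m\mid\boldsymbol x,y)p(\boldsymbol x_t)d\boldsymbol x_t$ by the Monte-Carlo estimate $p(m\mid\boldsymbol{\tilde x},y)\approx \tfrac{1}{n}\sum_{j=1}^{n} p(m_j\mid\boldsymbol x_j,y)$ with $\boldsymbol x_j\sim p(\boldsymbol x_t)$ drawn independently. This is exactly a finite mixture over the per-sample sub-map posteriors with uniform weights $w_j=1/n$, so the component indices $j=1,\dots,n$ play the role of the mixture components in Lemma~\ref{lem:meanvarmix}.

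First, I would explicitly identify each $M_j$ (the global map updated using the $j$-th sample pose) with a mixture component $X_j$ in Lemma~\ref{lem:meanvarmix}, set $w_j=1/n$, and note that the statement is meant pointwise, so the identification is done separately for each map cell $m^{[i]}$. Substituting $w_j=1/n$ into $\mu=\sum_j w_j\mathbb{E}[X_j]$ immediately yields \eqref{eq:expmapmean}.

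Next, I would plug $w_j=1/n$ into the variance expression from Lemma~\ref{lem:meanvarmix}, obtaining $\mathbb{V}[M]=\sum_{j=1}^n \tfrac{1}{n}\bigl(\sigma_j^2+\mu_j^2\bigr)-\bigl(\sum_{j=1}^n \tfrac{1}{n}\mu_j\bigr)^2$. Factoring the $1/n$ out of both sums and substituting $\mu_j=\mathbb{E}[M_j]$ and $\sigma_j^2=\mathbb{V}[M_j]$ reproduces \eqref{eq:expmapvar} after trivial algebraic rearrangement. This step is essentially bookkeeping and is the only place where a computation is performed.

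There is no real obstacle to overcome; the entire argument is an instantiation of Lemma~\ref{lem:meanvarmix} at the uniform weight vector together with the Monte-Carlo identity that justifies treating the sampled sub-maps as mixture components. The only subtlety worth flagging in the write-up is that the formulas are to be interpreted pointwise for each map point $m^{[i]}$, since $\mathbb{E}[M_j]$ and $\mathbb{V}[M_j]$ are vectors of per-cell Gaussian summaries produced by the $j$-th sub-map fusion; this justifies the final sentence of the proposition and does not require further proof.
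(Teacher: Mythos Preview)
Your proposal is correct and matches the paper's own approach exactly: the paper's proof is the single sentence ``The proof directly follows from Lemma~\ref{lem:meanvarmix},'' and you have simply unpacked that by setting $w_j=1/n$ and instantiating the mixture mean and variance formulas pointwise. Nothing further is needed.
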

\begin{proof}
 The proof directly follows from Lemma~\ref{lem:meanvarmix}.
\end{proof}

Therefore, we can perform incremental map fusion by considering the robot pose uncertainty without modifying the GP framework. 

\section{Warped GP Occupancy Mapping}
\label{sec:Warped}
The primary challenge in modeling the environment ``accurately'' is the different nature of free and occupied classes. Free space tends to span vast areas while occupied space often represents the structural shape of the environment. In addition, the assumption of additive Gaussian noise in the observations in standard GPs is unable to capture complexity in observations appropriately. We propose to employ Warped Gaussian Processes to account for the nonlinear behavior of observations. This method is appealing as it allows for non-Gaussian noise in the observation space. However, exact inference is not possible anymore, and approximate inference algorithms such as \emph{expectation propagation}~\cite{minka2001family} or \emph{variational Bayes}~\cite{jordan1999introduction} are required.

The idea to accommodate non-Gaussian distributions and noise is to use a nonlinear monotonic function for warping (transforming) the observation space~\cite{snelson2004warped}. Let $g_w(\cdot)$ be a transformation from the observation space to a latent space as
\begin{equation}
\small
 \label{eq:warptrans}
 t^{[i]} = g_w(y^{[i]}; \boldsymbol\psi) \qquad i=1\colon n
\end{equation}
where $\small{\boldsymbol\psi}$ denotes the vector of warping function hyperparameters and $\small{\boldsymbol t = [t^{[1]},\dots,t^{[n]}]^T}$ is the vector of latent targets. Now we can re-write the GP formulation for the latent target and by accounting for the transformation between a true observation and the latent target, the negative log of the marginal likelihood (NLML) can be written as
\begin{equation}
\footnotesize
\label{warpednlml}
\begin{split}
	&\log\ p(\boldsymbol y|X,\boldsymbol\theta,\boldsymbol\psi) = -\frac{1}{2}g_w(\boldsymbol y)^{T} [\boldsymbol K(\boldsymbol X,\boldsymbol X)+\sigma_n^2 \boldsymbol I_{n}]^{-1} g_w(\boldsymbol y) \\& -\frac{1}{2}\log\ \arrowvert \boldsymbol K(\boldsymbol X,\boldsymbol X)+\sigma_n^2 \boldsymbol I_{n} \arrowvert-\frac{n}{2}\log\ 2\pi + \sum_{i=1}^{n}\log\frac{\partial g_w(y)}{\partial y}\Bigg|_{y^{[i]}}
\end{split}
\end{equation}
in which the last term is the Jacobian of the defined transformation. To compute the mean at a new test point, it is possible to calculate the expectation of the inverse warping function over the latent target predictive density, therefore
\begin{equation}
\small
 \label{eq:expinv}
 \mathbb{E}[y^{[n+1]}] = \int g_w^{-1}(t)\mathcal{N}(\hat{t}^{[n+1]},\sigma^{[n+1]}) dt = \mathbb{E}[g_w^{-1}]
\end{equation}
This integral can be computed numerically using Gauss-Hermite quadrature with a weighted sum of the inverse warping function $g_w^{-1}$.

\begin{figure}
  \centering 
  \includegraphics[width=.85\columnwidth,trim={1.25cm 0cm 1.5cm 0.5cm},clip]{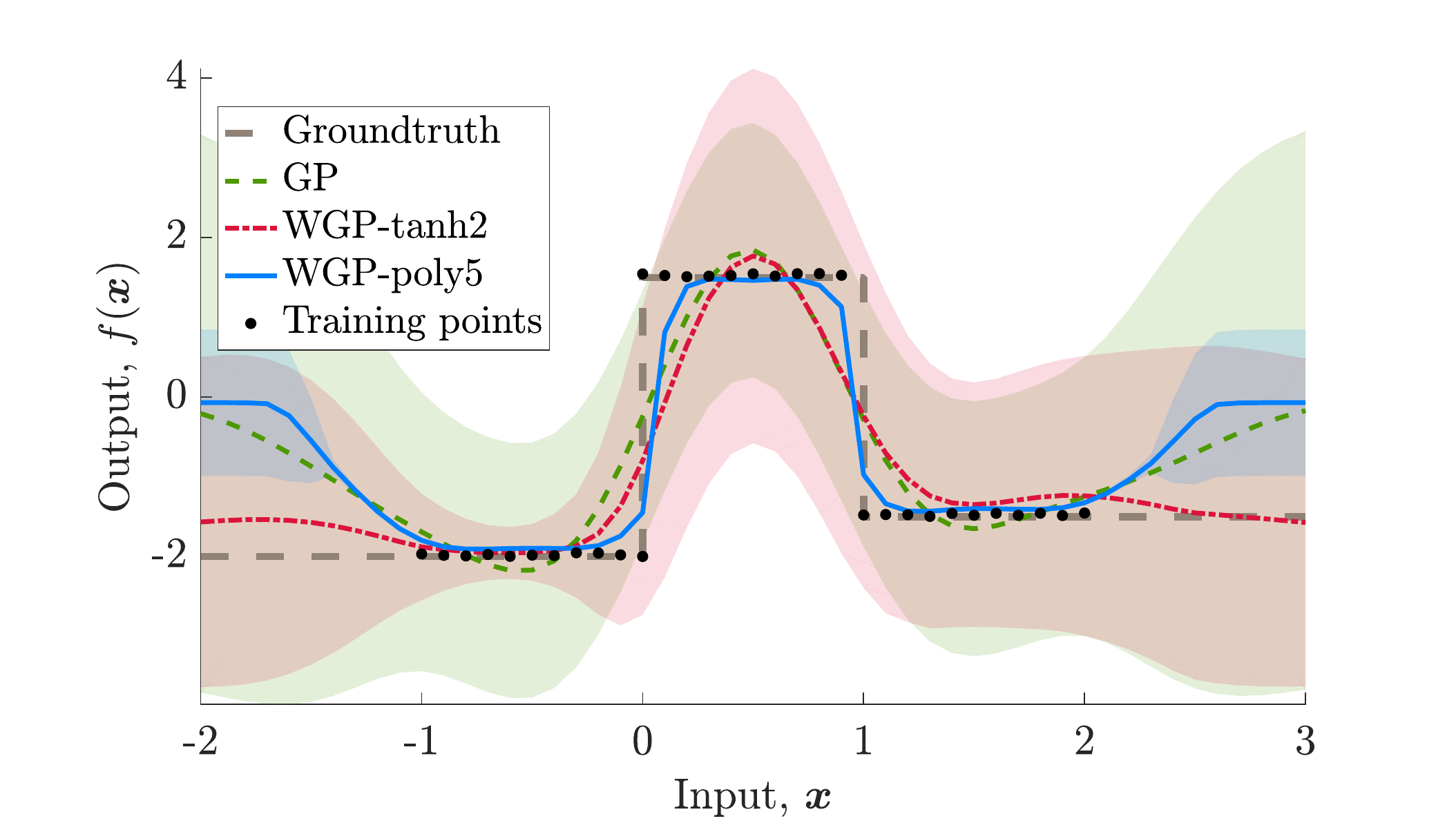}
  \caption{A challenging example of regression using standard and Warped GPs. The measurements are sampled from the true output values by adding noise, i.e. $\epsilon_y \sim \mathcal{N}(0,0.05^2)$. The warping functions are $\tanh (\ell=2)$ and polynomial of degree five. The $\tanh$ demonstrate a better extrapolating behavior, while the polynomial function can follow the underlying function more closely. However, polynomials are prone to over-fitting as it can be seen that the estimated uncertainties (blue shaded region) are significantly narrower.}
  \label{fig:toy_eg}
\end{figure}

Inspired by the neural network transfer functions, a sum of hyperbolic tangent functions satisfies the requirements for the transformation to be monotonic and at the same time allowing for complicated mappings. With hyperparameters vector $\small{\boldsymbol\psi = [\boldsymbol a, \boldsymbol b, \boldsymbol c]^T}$, the function can be defined as
\begin{equation}
\footnotesize
 \label{eq:warpfun}
 g_w(y; \boldsymbol\psi) = y + \sum_{i=1}^{\ell} a^{[i]} \tanh (b^{[i]}(y + c^{[i]})) \quad a^{[i]}, b^{[i]} \geq 0 \ \forall i \in \{1:\ell\}
\end{equation}
where the parameter $\ell$ is the number of steps and has to be set depending on the complexity of observations, \mbox{$\small{\boldsymbol a = [a^{[1]},\dots,a^{[\ell]}]}$}, $\small{\boldsymbol b = [b^{[1]},\dots,b^{[\ell]}]}$, and $\small{\boldsymbol c = [c^{[1]},\dots,c^{[\ell]}]}$. Alternative warping functions can be polynomials ($\small{\boldsymbol\psi = \boldsymbol c}$):
\begin{equation}
\small
 \label{eq:warppoly}
 g_w(y; \boldsymbol\psi) = y + \sum_{i=2}^{\ell} c^{[i-1]} \mathrm{sgn}(y) \lvert y \rvert^i  \quad c^{[i]} \geq 0 \ \forall i  \in \{2:\ell\}
\end{equation}

 Figure~\ref{fig:toy_eg} shows a simple yet challenging example for regression using standard and Warped GPs. The measurements are corrupted by an additive Gaussian noise. Even though the noise is still Gaussian, the complicated structure of the underlying function makes modeling it non-trivial. Note that the inputs are deterministic.
\begin{remark}
 By increasing the number of training points, it is possible to generate more accurate results using standard GPs. However, given the cubic time complexity of GPs, dense training datasets reduce the scalability of the algorithms significantly.
\end{remark}

\section{Results and Discussion}
\label{sec:wgpomres}
We now present results from experiments using a synthetic dataset and a real publicly available pose-graph dataset. The synthetic dataset, Figure~\ref{fig:synmap_setup}, is built in such a way as to highlight the strength of WGPOM to model complicated structural shapes and to better appreciate the mapping performance of the incremental GPOM and WGPOM under the expected kernel (EK) and expected sub-map (ESM) uncertainty propagation techniques. On the other hand, the Intel dataset~\cite{Radish_data_set}, as shown in Figure~\ref{fig:intel_posecov}, exposes an extreme real-world example of the problem at hand where highly uncertain robot poses along the estimated trajectory are present. 

We compare the overall taken time to build the entire map (using all the available data) as well as the map accuracy using the Area Under the receiving operating characteristic Curve (AUC)~\cite{fawcett2006introduction}. For each model, we learn the hyperparameters at the first increment of map building by minimization of the NLML using the first set of training data with manual supervision to ensure the best possible outcome for all models. The data processing and computations for the incremental map building are implemented using MATLAB.

\begin{figure}[!t]
  \centering  
    \includegraphics[width=.75\columnwidth,trim={1.75cm 0.5cm 1.75cm 1.cm},clip]{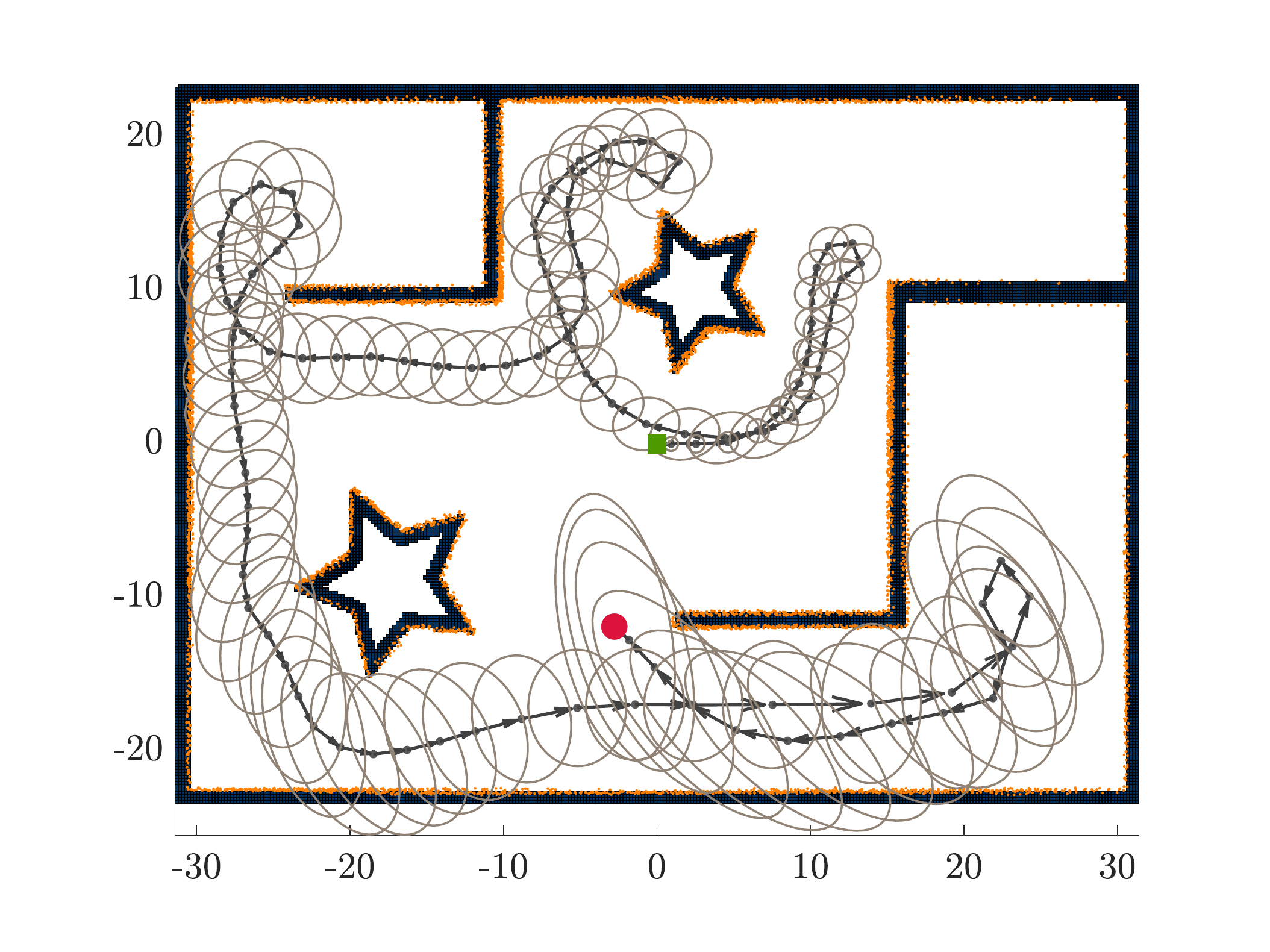}
  \caption{The synthetic dataset used for comparison of GPOM and WGPOM under various uncertainty propagation conditions. The figure shows collected observations along the robot trajectory (orange dots) and the robot position uncertainty ellipse at each corresponding pose ($\boldsymbol Q_3$ scenario). The robot starting position is shown in green. Map dimensions are in meters.}
  \label{fig:synmap_setup}
\end{figure}

\begin{figure}
  \centering  
  \subfloat{
    \includegraphics[width=.8\columnwidth,trim={0.5cm 0.cm 0.5cm 0.75cm},clip]{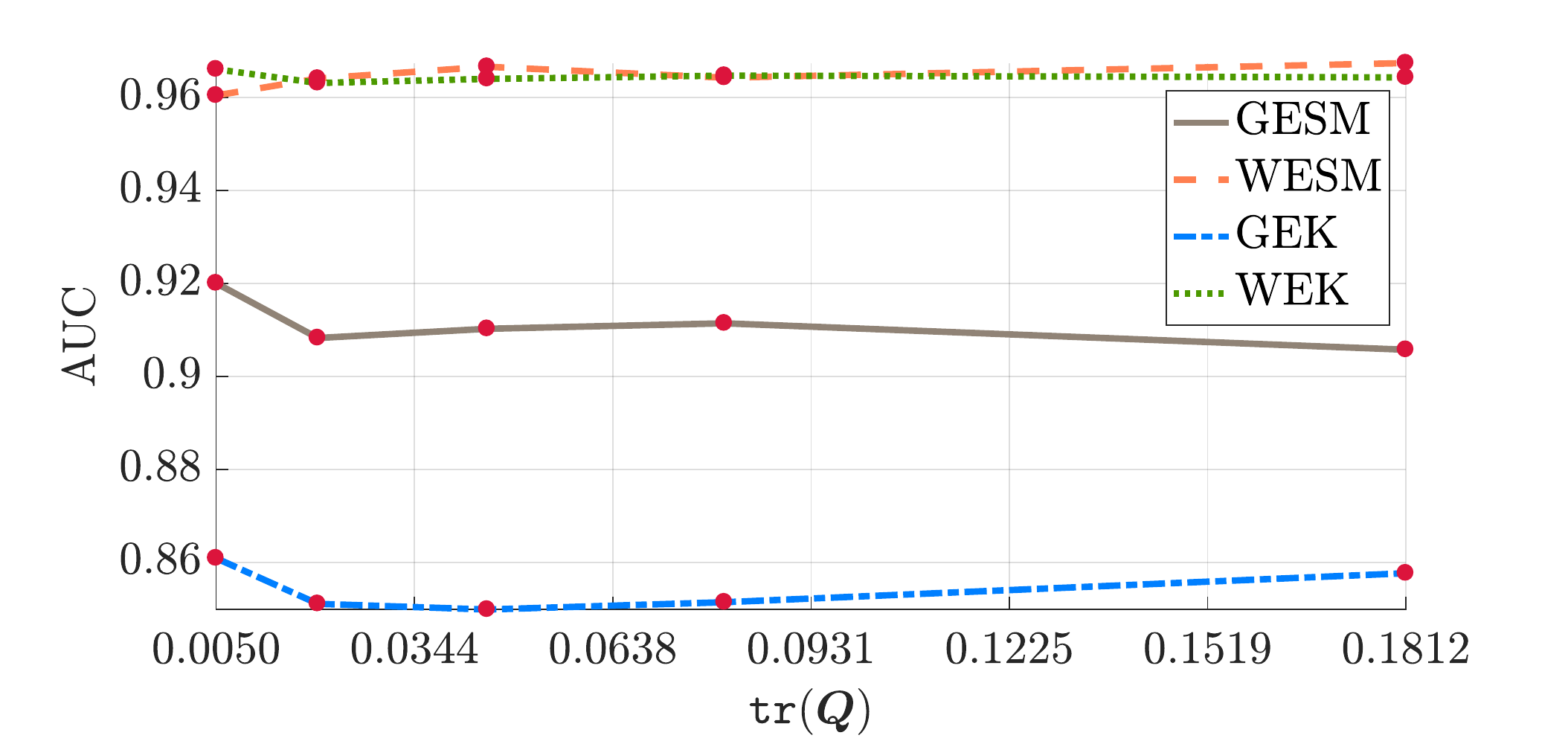}
    \label{fig:qplot}
    }\\
  \subfloat{
    \includegraphics[width=.8\columnwidth,trim={0.5cm 0.cm 0.5cm 0.75cm},clip]{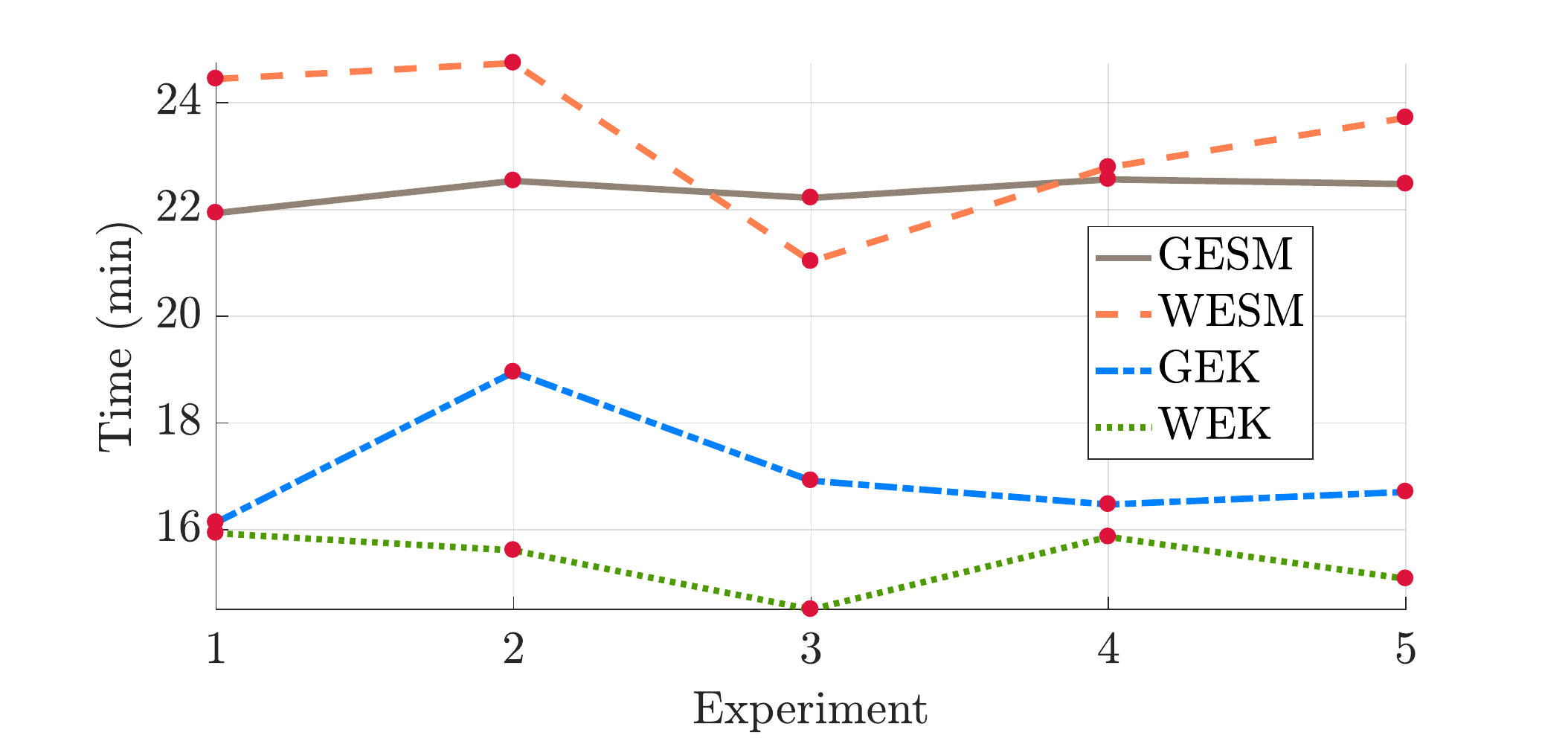}
    \label{fig:qplot_time}
    }
  \caption{The AUC and runtime for incremental GPOM and WGPOM using the synthetic dataset and proposed uncertainty propagation methods. The runtimes are in minutes. All maps are computed using $0.5\m$ resolution and a $k$d-tree data structure for the storage and nearest neighbor queries. As it is expected, generally, by propagating pose uncertainties into the map inference, the map quality degrades (GESM and GEK). However, applying WGPs can alleviate this effect and produce maps with improved accuracies (WESM and WEK).}
  \label{fig:synmap_qplot}
\end{figure}

\begin{table}[!ht]
\centering
\caption{Details of the experiments for the motion uncertainty effect in a synthetic dataset using the expected kernel and the expected sub-map uncertainty propagation schemes. The experiment is repeated five times by increasing the robot motion noise covariance, i.e. $\boldsymbol Q_1, \dots, \boldsymbol Q_5$. Note that all mapping techniques are incremental.}
\resizebox{\columnwidth}{!}{
\begin{tabular}{lll}
\toprule
\multicolumn{3}{l}{\textbf{Model selection:}} \\
Technique		& Cov. func.		& Warping func. \\ \midrule
GPOM			& Mat\'ern ($\nu = 5/2$)	& n/a			\\
WGPOM 			& SE (ARD)					& $\tanh$ ($\ell=2$)	\\ 
\multicolumn{3}{l}{\textbf{Compared mapping techniques:}} \\
Technique	& Uncertainty propagation		& Abbreviation \\ \midrule
GPOM		& EK			& GEK		\\
GPOM 		& ESM			& GESM		\\
WGPOM 		& EK			& WEK		\\ 
WGPOM 		& ESM			& WESM		\\ 
\multicolumn{3}{l}{\textbf{Numerical integration:}} \\
Uncertainty propagation			& Technique	& Samples \\ \midrule
EK					& Gauss-Hermite 	& $9$		\\
ESM 					& Monte-Carlo		& $10$		\\ 
\multicolumn{3}{l}{\textbf{Training and test (query) points size:}} \\
Parameter				& Symbol			& Value \\ \midrule
Tot. training points	& $n_{t}$			& 87833	\\ 
Ave. training points 	& $\bar{n}_{t}$		& 829	\\
Test points 		& $n_{q}$			& 6561	\\ 
\multicolumn{3}{l}{\textbf{Robot motion model noise covariances:}} \\
\multicolumn{3}{l}{$\boldsymbol Q_1 = \diag(0.05\m, 0.05\m, 0.25\rad)^2$, $\boldsymbol Q_2 = \diag(0.1\m, 0.1\m, 0.5\rad)^2$}	\\ 
\multicolumn{3}{l}{$\boldsymbol Q_3 = \diag(0.15\m, 0.15\m, 0.75\rad)^2$, $\boldsymbol Q_4 = \diag(0.2\m, 0.2\m, 1.00\rad)^2$}	\\ 
\multicolumn{3}{l}{$\boldsymbol Q_5 = \diag(0.3\m, 0.3\m, 2.00\rad)^2$}	\\ \bottomrule
\label{tab:starexpsetup}
\end{tabular}}
\end{table}

\begin{figure*}
  \centering  
  \subfloat[]{
    \includegraphics[width=.53\columnwidth]{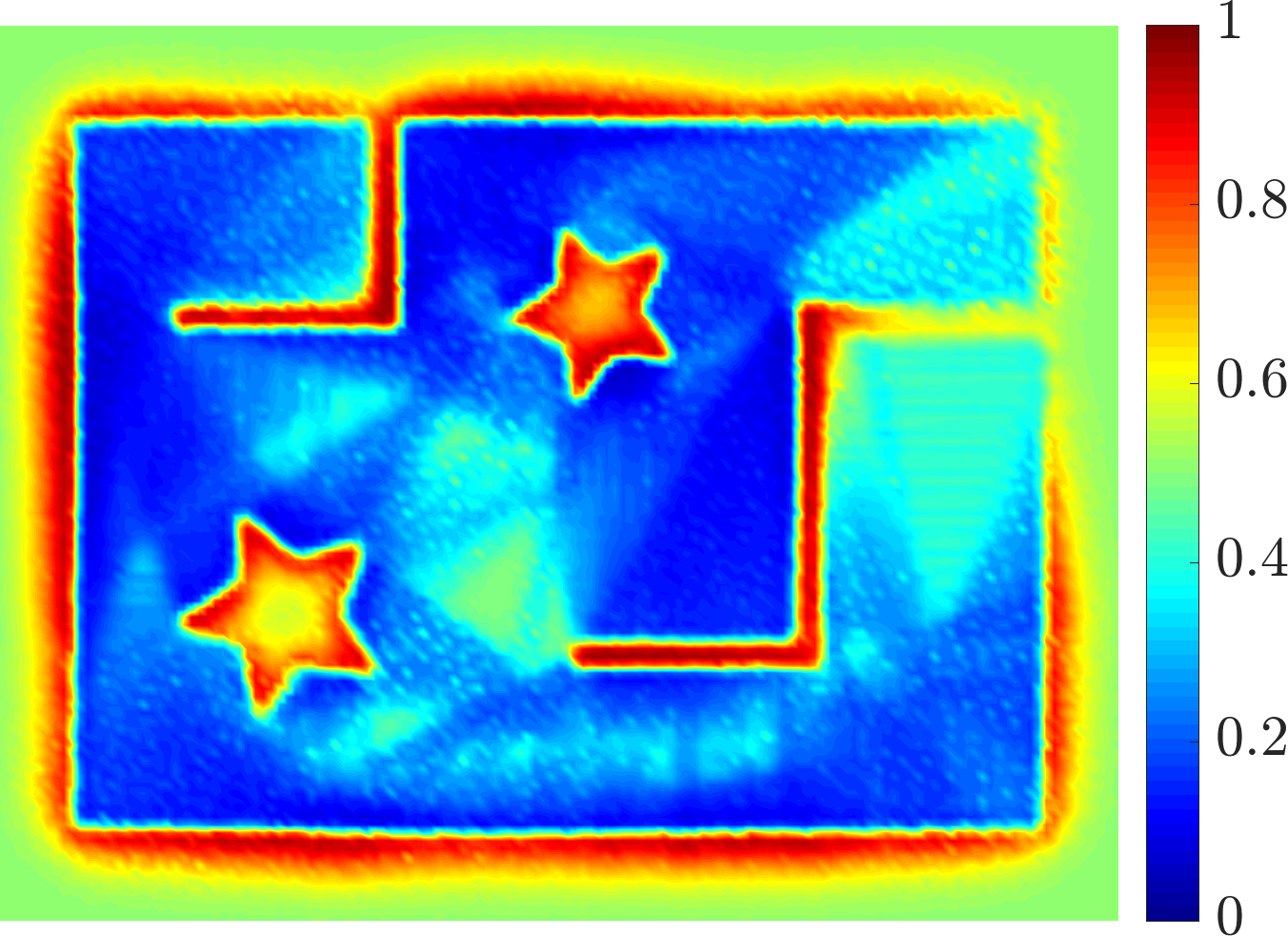}
    \label{fig:gpom_startest}
    }
  \subfloat[]{
    \includegraphics[width=.53\columnwidth]{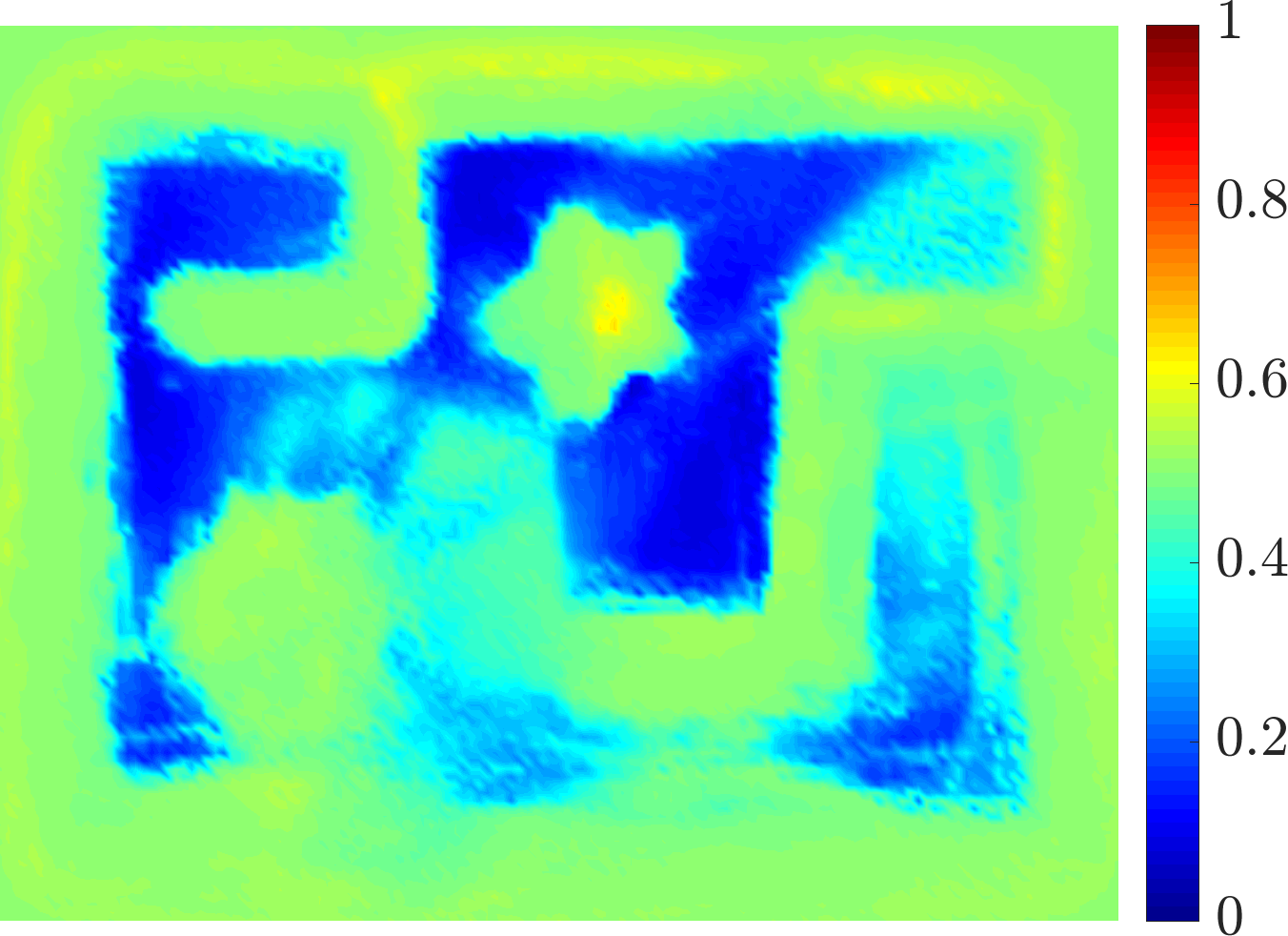}
    \label{fig:gpom_startest_esm}
    }
  \subfloat[]{
    \includegraphics[width=.53\columnwidth]{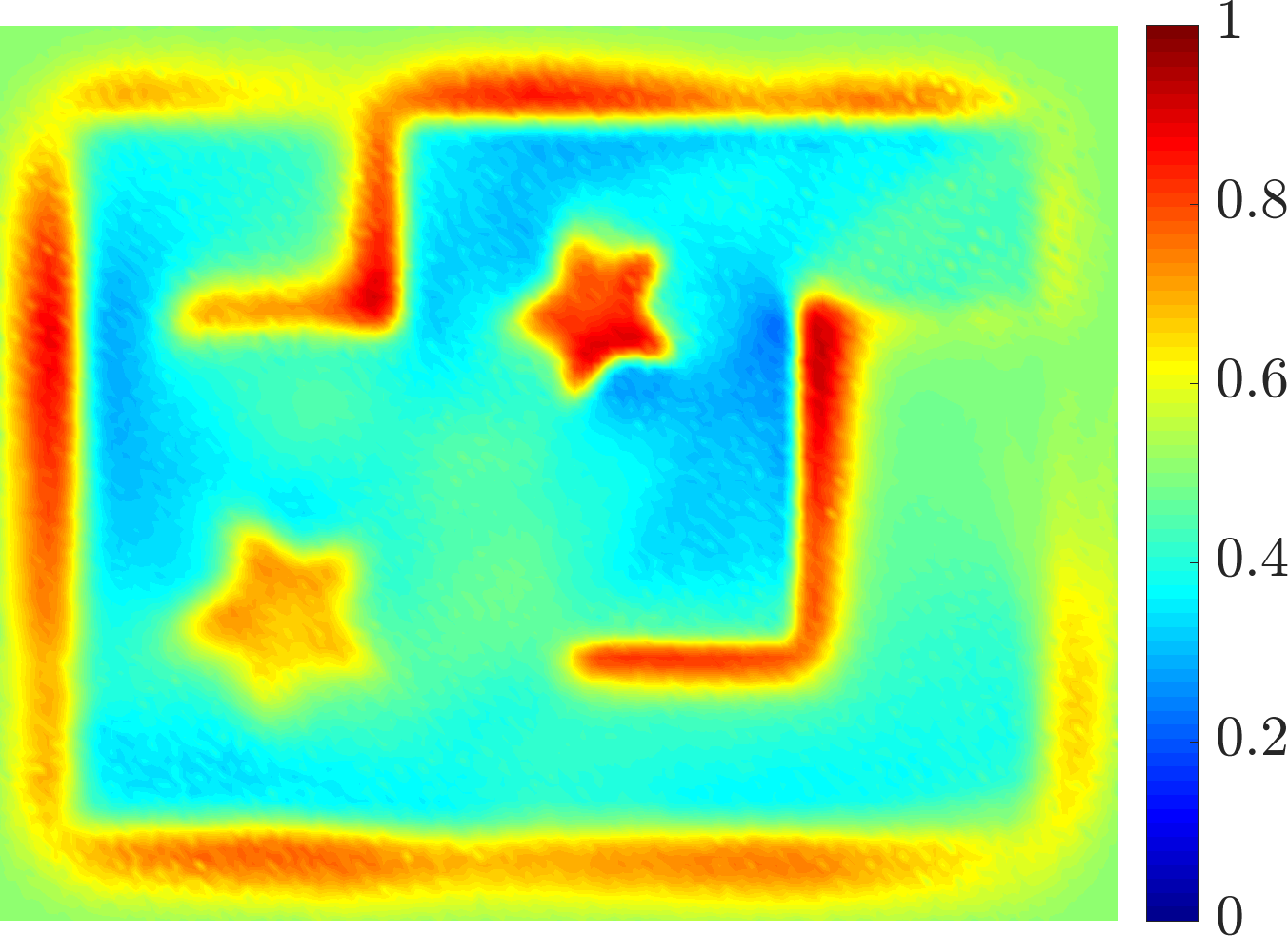}
    \label{fig:gpom_startest_ek}
    }\\
  \subfloat[]{
    \includegraphics[width=.53\columnwidth]{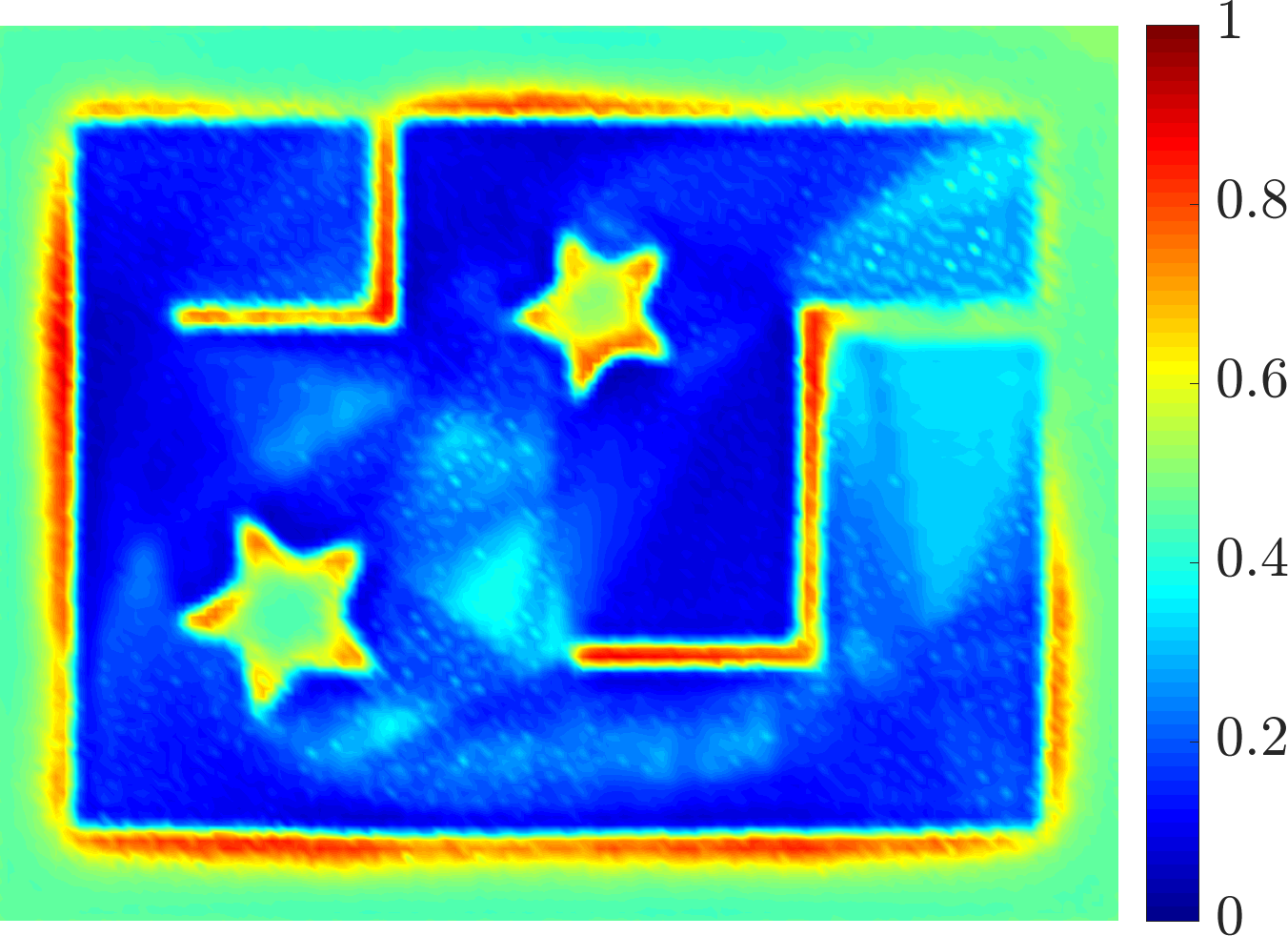}
    \label{fig:wgpom_startest}
    }
  \subfloat[]{
    \includegraphics[width=.53\columnwidth]{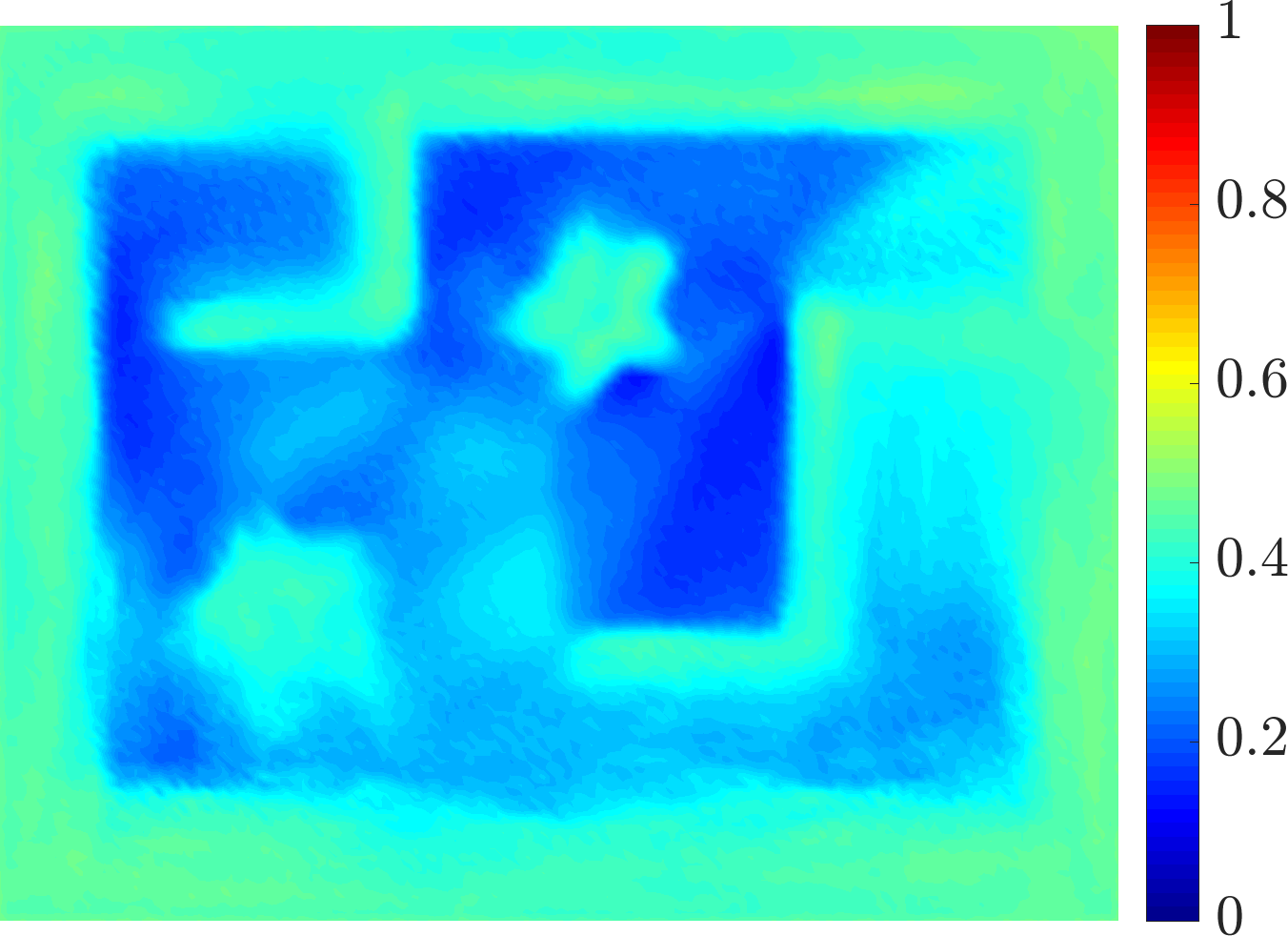}
    \label{fig:wgpom_startest_esm}
    }
  \subfloat[]{
    \includegraphics[width=.53\columnwidth]{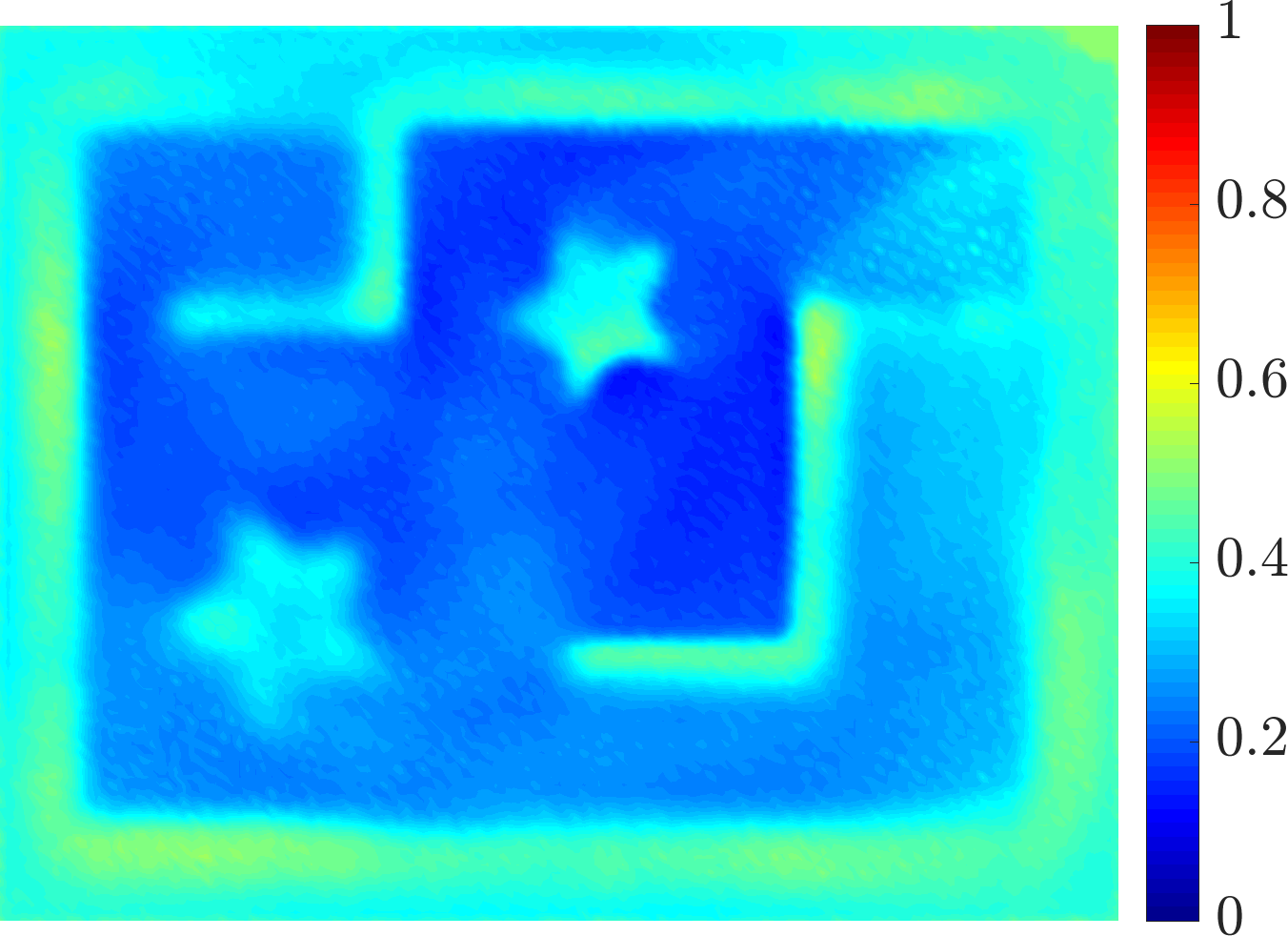}
    \label{fig:wgpom_startest_ek}
    }
  \caption{Illustrative examples of the occupancy map from the first experiment (for $\boldsymbol Q_3$). The top row corresponds to the GPOM, and the bottom row shows WGPOM results. The maps in (a) and (d) show GPOM and WGPOM results by ignoring the robot pose uncertainties. In (b) and (e) the robot pose uncertainty is incorporated using the expected sub-map method. In (c) and (f) the robot pose uncertainty is incorporated using the expected kernel method. The WGPOM-based maps can deal with input uncertainty better and provide maps with higher quality, shown in (e) and (f). All maps are computed using $0.5\m$ resolution.}
  \label{fig:startest}
\end{figure*}

\subsection{First Experiment: Motion Uncertainty Effect}
\label{subsec:wmapres}
The map of the environment, the robot trajectory, the observations collected at each pose using a simulated rangefinder sensor, together with the evolution of the robot pose uncertainty due to its motion noise, are illustrated in Figure~\ref{fig:synmap_setup}. The uncertainty ellipsoids show the worst-case covariance of the robot position, and there is no uncertainty reduction along the path by closing loops.

Details from the model selection, compared techniques, and conducted experiments are collected in Table~\ref{tab:starexpsetup}. We use Mat\'ern ($\nu = 5/2$) covariance function for GPOM as its performance has been shown fitting in earlier works~\cite{jadidi2016gaussian}. For WGPOM, we use SE covariance function with Automatic Relevance Determination (ARD)~\cite{neal1996bayesian}. Since the observations do not cover the entire map, we use $\tanh$ with $\ell=2$ as the warping function to improve the extrapolation ability of GPs (Figure~\ref{fig:toy_eg}). The experiment for each mapping technique using EK and ESM uncertainty propagation is repeated by increasing the robot motion noise covariance in five steps. In Figure~\ref{fig:synmap_qplot}, the map accuracy and runtime comparisons of all methods using AUC are shown. In the bottom plot, the runtime for the ESM is higher than the EK for both mapping techniques. From the top plot, we can see that applying WGPs improves the map quality regardless of the uncertainty propagation choice. The expected kernel is computed using Gauss-Hermite quadrature with $9$ sample points, and the expected sub-map computations are performed using Monte-Carlo approximations with $10$ samples. When increasing the number of samples from $9$ to $18$ we did not observe any improvement in the map accuracy.

Figure~\ref{fig:startest} illustrates the results of all combinations of the proposed techniques. GPOM and WGPOM by ignoring the robot pose uncertainty are shown in Figures~\ref{fig:gpom_startest} and \ref{fig:wgpom_startest}, respectively. WGPOM demonstrates a better discrimination performance between class labels in the absence of measurements. This effect can be seen in the middle of the smaller star in both maps. Note that further optimization of hyperparameters can lead to over-fitting instead of solving the discussed problem. 

Even though ESM and EK try to achieve the same goal, they demonstrate different behaviors. Generally speaking, integration over the covariance function (EK) has a smoothing effect that sometimes can be desirable. For example, in all the presented maps the central part of the map due to the lack of observation is partially complete. As a result of this smoothing effect of the EK, this part is correctly classified to be closer to the free space class. However, the probabilities are closer to $0.5$, and the robot cannot be completely confident about the status of the area. Alternatively, ESM leads to relatively more confident maps with smaller smoothing effects. The resultant maps are safer for navigation as the gap between occupied and unoccupied areas is classified as an unknown region. While this behavior can be an appealing property from a motion planning point of view, the occupied areas are faded; and we cannot see the structural shape of the environment accurately. Overall, by propagating more uncertainty into the map inference process, we expect less accuracy in the outcome and more realistic estimation of the belief. However, WGPs by modeling nonlinearity in the observation space improve the map quality.

\subsection{Experimental Results}
\label{subsec:wmapexpres}
The experimental results of occupancy mapping using the Intel dataset are shown in Table~\ref{tab:intelaucroc} and Figure~\ref{fig:Intel_warp}. In the absence of a complete groundtruth map, the groundtruth map for this dataset is generated using the estimated robot trajectory and rangefinder measurements. In this way, the groundtruth map has the same orientation which makes the comparison convenient. The covariance function used is an intrinsically sparse kernel~\cite{melkumyan2009sparse}. The logic behind this choice is that the structural shape of the environment is complex, and it is cluttered with random people and typical office furniture; therefore, using a covariance function that correlates map points over a long range is not suitable. We use the Sparse covariance function for both GPOM and WGPOM and their corresponding uncertainty propagation experiments.

\begin{table}
\footnotesize
\centering
\caption{The AUC and runtime for incremental GPOM and WGPOM using the Intel dataset and proposed uncertainty propagation methods. The runtimes are in minutes. Maps are computed using $0.2\m$ resolution and a $k$d-tree data structure for the storage and nearest neighbor queries. The Sparse covariance function is used for all techniques and Polynomials of degree seven as the warping function. Total and average number of training points, and the number of test points are $n_{t} = 137979$, $\bar{n}_{t} = 186$, and $n_{q}= 40401$, respectively.}
\begin{tabular}{lcccccc}
\toprule
Method	&\multicolumn{2}{c}{No uncertainty} & \multicolumn{2}{c}{EK} & \multicolumn{2}{c}{ESM} \\ \midrule
				& AUC		& Time 	& AUC		& Time 	& AUC		& Time  \\ \midrule
GPOM			& 0.8499	& 72	& 0.7323	& 532 	& 0.7026	& 657 	\\
WGPOM			& 0.8463	& 125	& 0.7887	& 577 	& 0.7436	& 766 \\ \bottomrule
\end{tabular}
\label{tab:intelaucroc}
\end{table}

Table~\ref{tab:intelaucroc} shows the runtime and accuracy comparison of the mapping techniques. In this scenario, where the pose uncertainties are ignored, GPOM performs marginally better than WGPOM. This result, shown in Figures~\ref{fig:intel_gpom} and \ref{fig:intel_wgpom}, can be understood from the fact that WGPOM has covered more partially observed areas in the map. While this is desirable, in the absence of a complete groundtruth map, it leads to a lower AUC. However, WGPOM maintains more accurate maps by incorporating the pose uncertainties which is the actual problem to be solved.

\begin{figure*}
  \centering  
  \subfloat[]{
    \includegraphics[width=0.58\columnwidth]{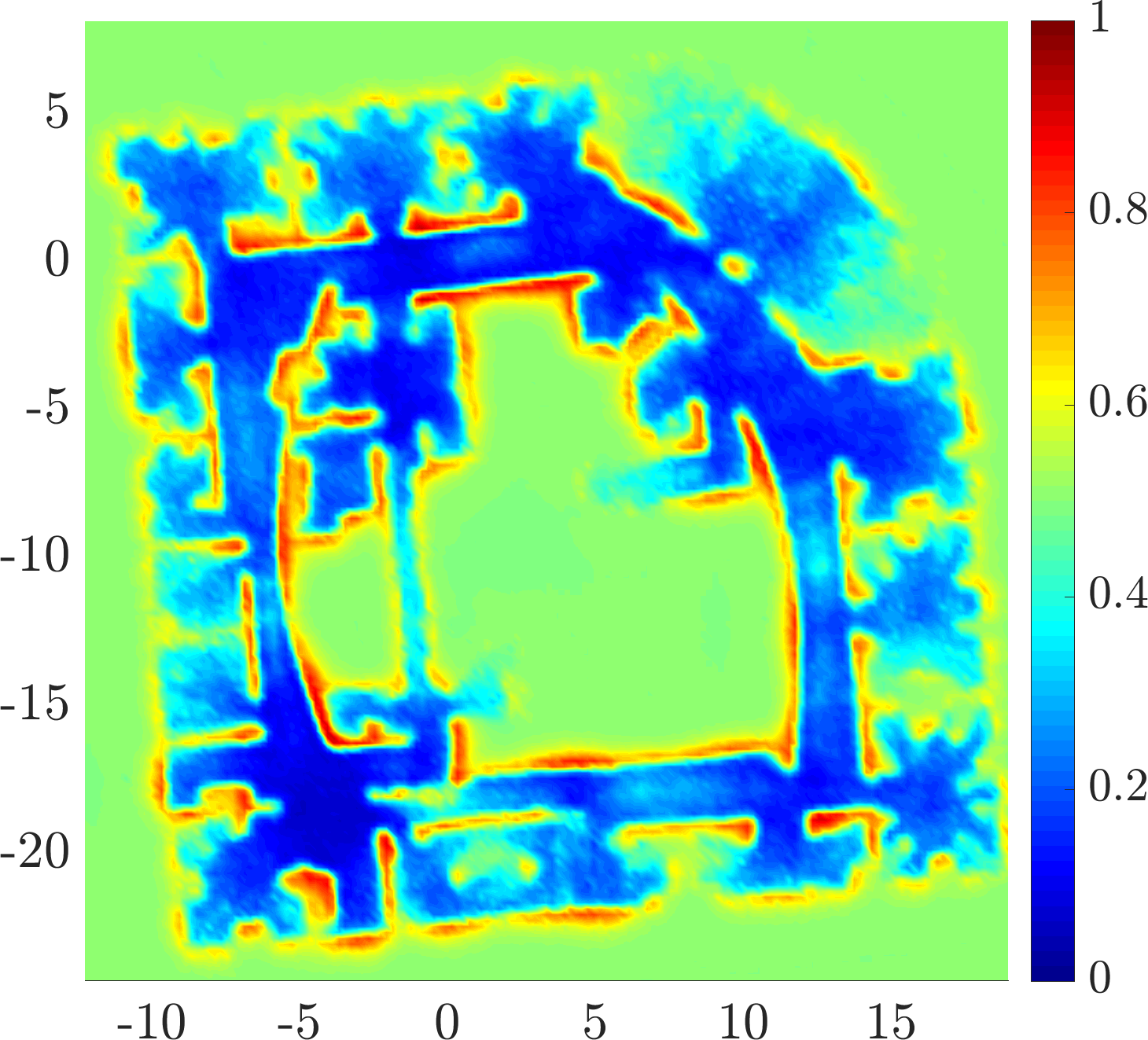}
    \label{fig:intel_gpom}
    }
  \subfloat[]{
    \includegraphics[width=0.58\columnwidth]{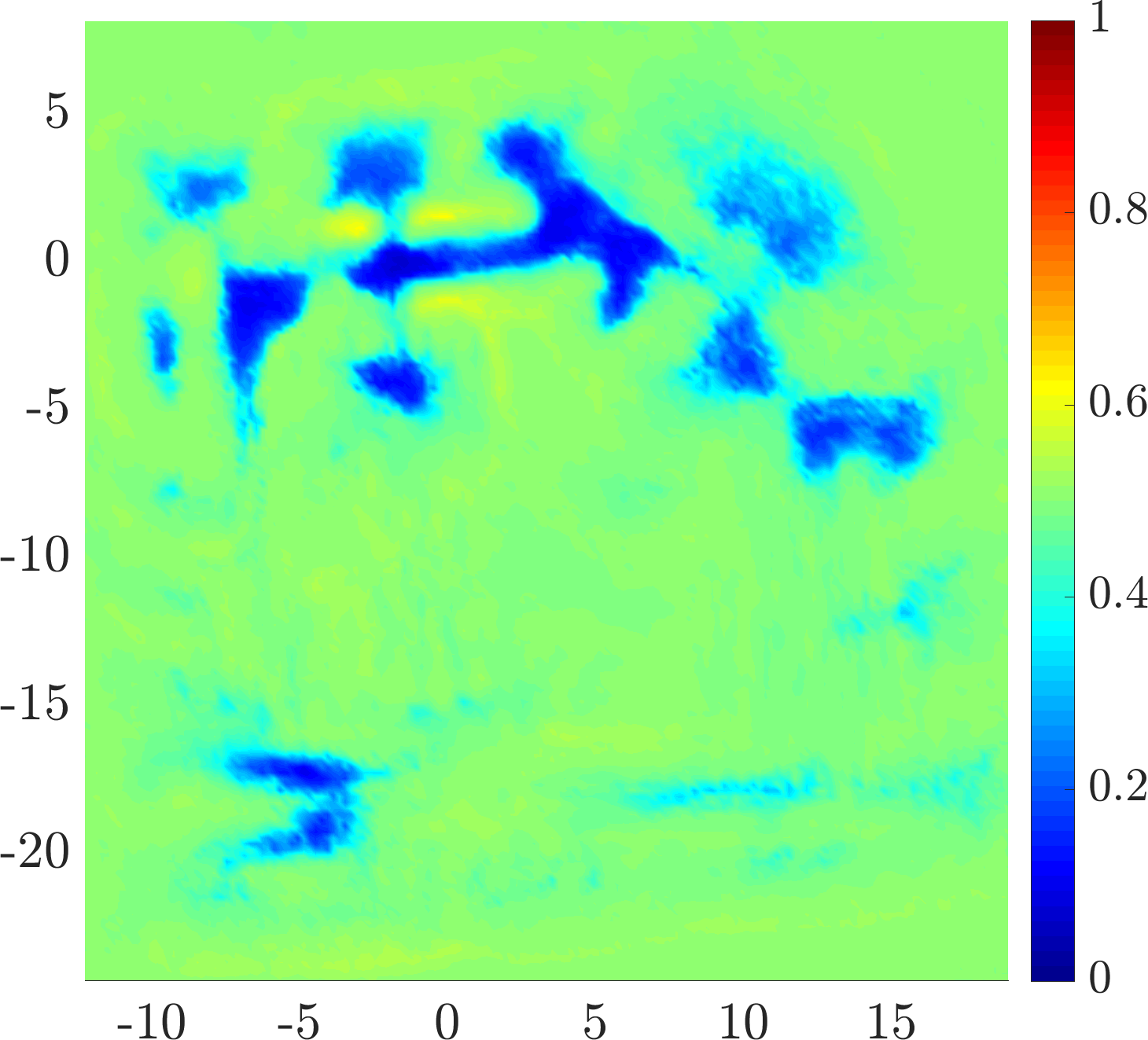}
    \label{fig:intel_esm}
    }
  \subfloat[]{
    \includegraphics[width=0.58\columnwidth]{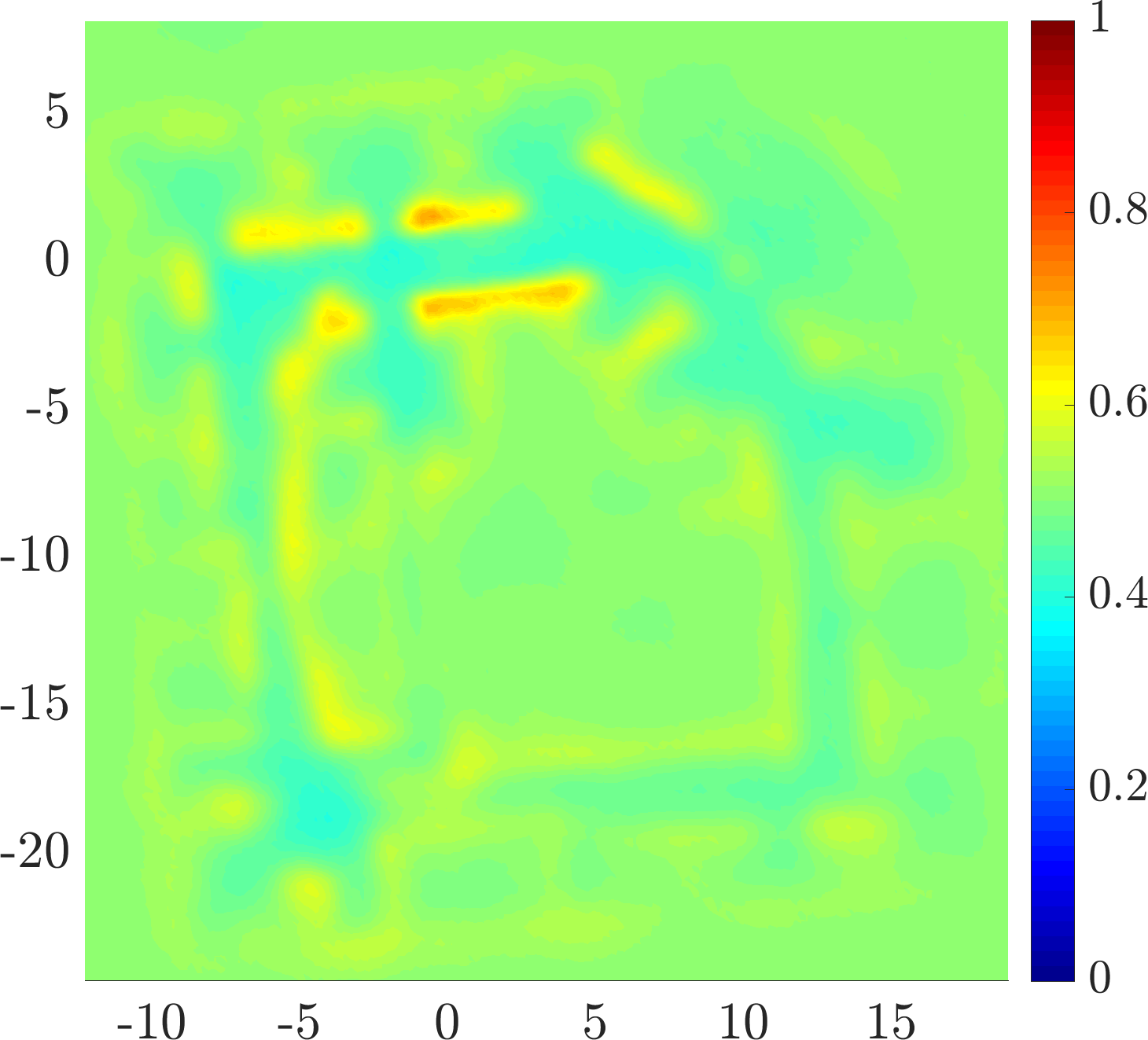}
    \label{fig:intel_ek}
    }\\
  \subfloat[]{
    \includegraphics[width=0.58\columnwidth]{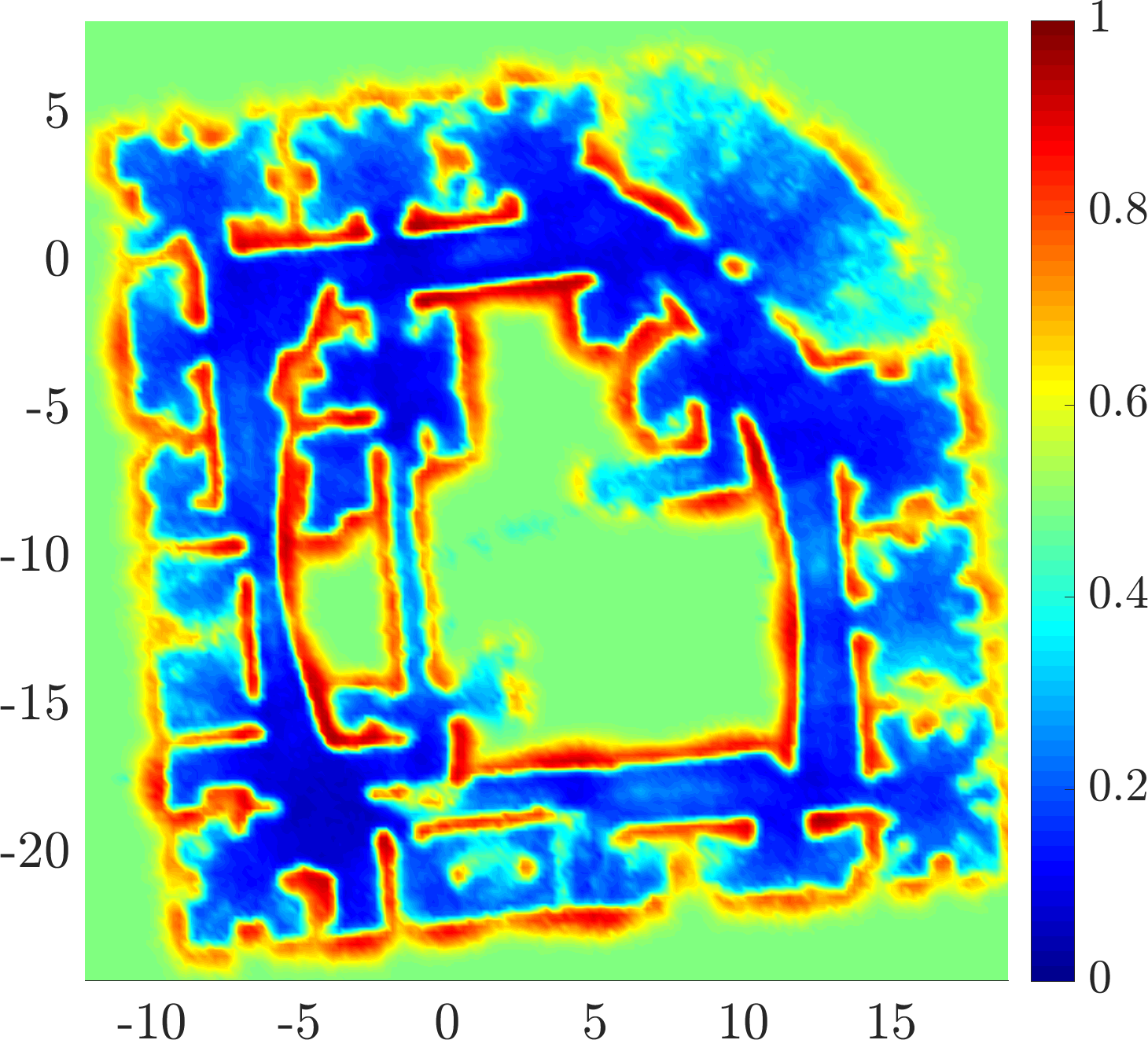}
    \label{fig:intel_wgpom}
    }
  \subfloat[]{
    \includegraphics[width=0.58\columnwidth]{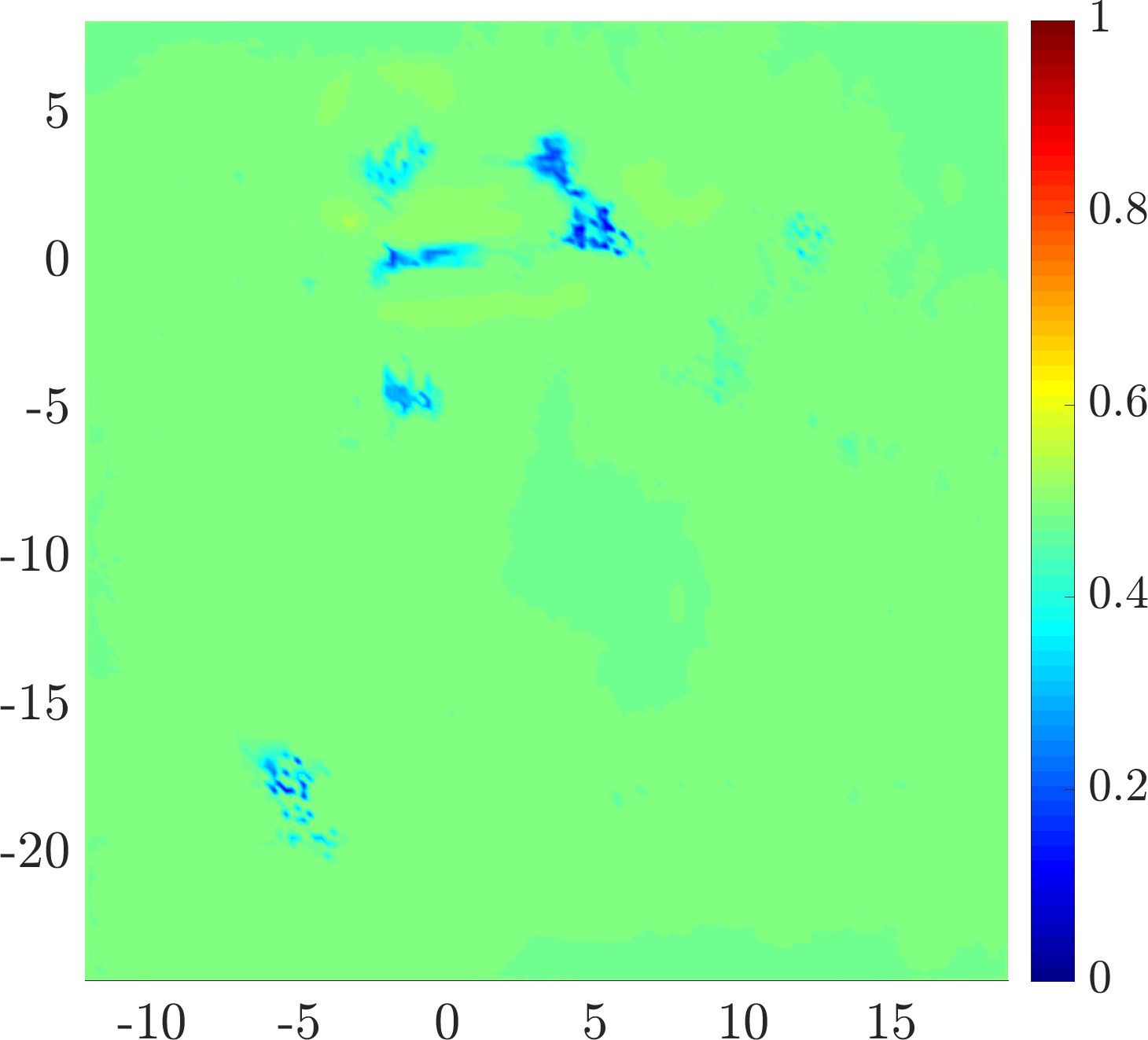}
    \label{fig:intel_wesm}
    }
  \subfloat[]{
    \includegraphics[width=0.58\columnwidth]{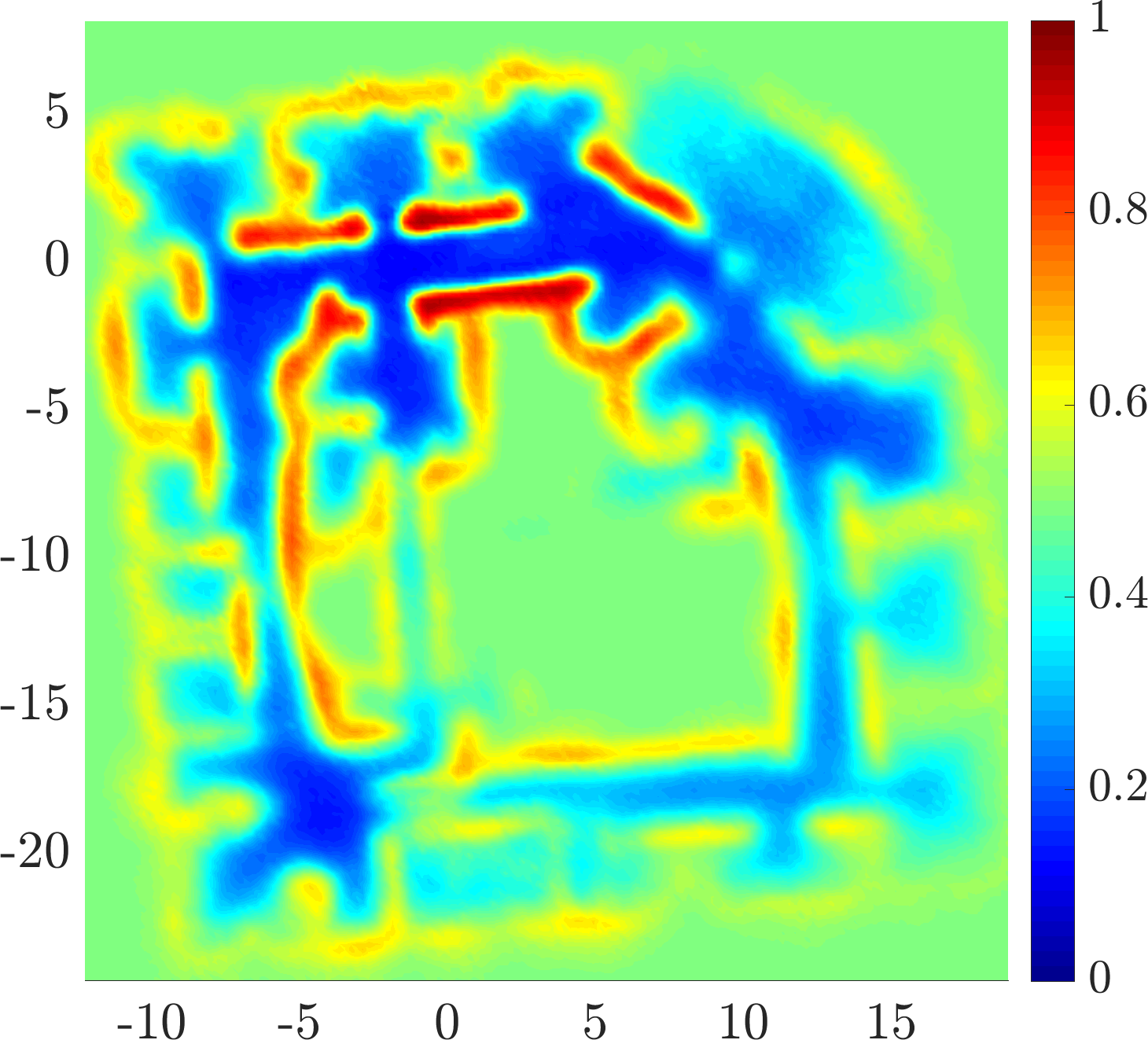}
    \label{fig:intel_wek}
    }
  \caption{Occupancy mapping results using the Intel dataset. The top row corresponds to the GPOM, and the bottom row shows WGPOM results. The maps built by ignoring pose uncertainties are shown in (a) and (d) using GPOM and WGPOM, respectively.  In (b) and (e) the robot pose uncertainty is incorporated using the expected sub-map method. In (c) and (f) the robot pose uncertainty is incorporated using the expected kernel method.}
  \label{fig:Intel_warp}
\end{figure*}

In this experiment, the uncertainty propagation using the ESM method leads to poor map qualities and the maps, shown in Figures~\ref{fig:intel_esm} and \ref{fig:intel_wesm}, are almost entirely faded due to the significant pose uncertainties resulting from the robot repeatedly traveling through the same areas. This fading effect could be partially mitigated by increasing the number of samples, but based on the runtimes in Table~\ref{tab:intelaucroc}, it is not justifiable. Nevertheless, WGPOM using the EK demonstrates a better performance in comparison to all methods that incorporated pose uncertainties; moreover, it produces a map that is also usable for navigation tasks, i.e. obstacle avoidance, and path planning.

\subsection{Discussion and Limitations}
\label{subsec:wmaplimit}
Uncertainty propagation through the developed methods can provide safer maps for robotic navigation. However, long-term uncertainty propagation leads to highly faded maps. If the uncertainty at each step is large and the robot cannot improve the localization confidence through loop-closures, this fading effect is more severe. Also, the uncertainty of the robot orientation has a large impact on the map quality. As we have seen in the presented results, in the expected sub-map approach, if the orientation uncertainty is significant, integration using a small number of samples leads to poor map accuracies. The expected kernel technique has an advantage from this perspective, as the unscented transform maps the measurement into the map space, and samples are related to the map spatial dimensions.

It is also demonstrated that the concept of accounting for possible nonlinearities in the observation space, here through the warping function, has desirable effects on the map quality. Our results showed that regardless of the uncertainty propagation technique, applying WGPs provide better maps than standard GPs. We reiterate that by ignoring the robot pose uncertainty, the map is a potentially incorrect representation of the environment.

Finding an appropriate warping function that is compatible with non-Gaussianity in the observation space can be time-consuming unless the model selection is performed in a more systematic way. In this work, we tried several functions and chose the best one. Moreover, since the exact inference is not possible, approximate methods may not always converge. Although the upper-bound time complexity of WGPOM is similar to that of GPOM, in practice for larger datasets the inference takes longer. Improving the computational efficiency of the proposed methods is an interesting direction to follow.

\subsection{Computational Complexity}
\label{subsec:complexity}
For both GPOM and WGPOM, the worst-case time complexity is cubic in the number of training data, $\mathcal{O}(\bar{n}_{t}^3)$. For ESM, the number of sub-map fusion into the global map scales linearly with the number of samples, $n_s$, and the sub-map fusion involves a nearest neighbor query for each test point resulting in $\mathcal{O}(\bar{n}_{t}^3 + n_s n_{q} \log n_{q})$. In the case of two-dimensional mapping, using Gauss-Hermite quadrature, the time complexity of EK computation is quadratic in the number of sample points and together with sub-map fusion leads to $\mathcal{O}(\bar{n}_{t}^3 + n_s^2 + n_{q} \log n_{q})$. In EK, applying the unscented transform to all training points involves a Cholesky factorization of the input covariance matrix which is ignored for the two-dimensional case.

\section{CONCLUSION}
\label{sec:conclusion}

In this paper, we studied incremental GP occupancy mapping extensions through warped GPs. Since occupancy maps have dense belief representations, the robot pose uncertainty is often ignored. We proposed two methods to incorporate robot pose uncertainty into the map inference, the expected kernel and the expected sub-map. While the expected kernel handles the input uncertainty within the GPs framework, the expected sub-map exploits the inherent property of stationary covariance functions for map inference in the local frame with deterministic inputs. The proposed methods can also be useful if the belief representation is not dense (as opposed to occupancy mapping). Furthermore, the WGPOM technique can deal with the nonlinear behavior of measurements through a nonlinear transformation which improves the ability of GPs to learn complex structural shapes more accurately, especially, under uncertain inputs.

Future work includes further examinations of the proposed methods in practical robotic exploration and obstacle avoidance scenarios, for example, using currently popular visual-inertial odometry systems.


%

\bibliographystyle{IEEEtran} 
\bibliography{refs}

\end{document}